\def\eqref#1{equation~\ref{#1}}
\def\1{\bm{1}}
\DeclareMathAlphabet{\mathsfit}{\encodingdefault}{\sfdefault}{m}{sl}
\SetMathAlphabet{\mathsfit}{bold}{\encodingdefault}{\sfdefault}{bx}{n}
\newtheorem{proposition}{Proposition}
\renewcommand\cite[1]{\citep{#1}}  %
\newcommand{\cutsectionup}{\vspace*{-0.04in}}
\newcommand{\cutsectiondown}{\vspace*{-0.04in}}
\newcommand{\cutsubsectionup}{\vspace*{-0.04in}}
\newcommand{\cutsubsectiondown}{\vspace*{-0.06in}}
\title{Do Wide and Deep Networks Learn the Same Things? Uncovering How Neural Network Representations Vary with Width and Depth}
\author{Thao Nguyen\thanks{Work done as a member of the Google AI Residency program.},~~Maithra Raghu, \& Simon Kornblith \\
Google Research\\
\texttt{\{thaotn,maithra,skornblith\}@google.com}
}
\begin{document}

\maketitle

\begin{abstract}
A key factor in the success of deep neural networks is the ability to scale models to improve performance by varying the architecture depth and width.
This simple property of neural network design has resulted in highly effective architectures for a variety of tasks. Nevertheless, there is limited understanding of effects of depth and width on the \textit{learned representations}. In this paper, we study this fundamental question. We begin by investigating how varying depth and width affects model hidden representations, finding a characteristic \textit{block structure} in the hidden representations of larger capacity (wider or deeper) models. We demonstrate that this block structure arises when model capacity is large relative to the size of the training set, and is indicative of the underlying layers preserving and propagating the dominant principal component of their representations. This discovery has important ramifications for features learned by different models, namely, representations outside the block structure are often similar across architectures with varying widths and depths, but the block structure is unique to each model. We analyze the output predictions of different model architectures, finding that even when the overall accuracy is similar, wide and deep models exhibit distinctive error patterns and variations across classes.

\end{abstract}
\cutsectionup
\section{Introduction}
\cutsectiondown

Deep neural network architectures are typically tailored to available computational resources by scaling their width and/or depth. Remarkably, this simple approach to model scaling can result in state-of-the-art networks for both high- and low-resource regimes~\cite{tan2019efficientnet}.
However, despite the ubiquity of varying depth and width, there is limited understanding of how varying these properties %
affects the final model \textit{beyond} its performance.
Investigating this fundamental question is critical, especially with the continually increasing compute resources devoted to designing and training new network architectures.%

More concretely, we can ask, how do depth and width affect the final learned representations? Do these different model architectures also learn different intermediate (hidden layer) features? Are there discernible differences in the outputs? In this paper, we study these core questions, through detailed analysis of a family of ResNet models with varying depths and widths trained on CIFAR-10 \cite{cifar10}, CIFAR-100 and ImageNet \cite{imagenet}. 

We show that depth/width variations result in distinctive characteristics in the model internal representations, with resulting consequences for representations and outputs across different model initializations and architectures. Specifically, our contributions are as follows:
\begin{itemize}[leftmargin=1em,itemsep=0em,topsep=0em]
    \item We develop a method based on centered kernel alignment (CKA) to efficiently measure the similarity of the hidden representations of wide and deep neural networks using minibatches.
    \item We apply this method to different network architectures, finding that representations in wide or deep models exhibit a characteristic structure, which we term the \textit{block structure}. We study how the block structure varies across different training runs, and uncover a connection between block structure and model overparametrization --- block structure primarily appears in overparameterized models.
    \item  Through further analysis, we find that the block structure corresponds to hidden representations having a single principal component that explains the majority of the variance in the representation, which is preserved and propagated through the corresponding layers. We show that some hidden layers exhibiting the block structure can be pruned with minimal impact on performance. 
    \item With this insight on the representational structures within a single network, we turn to comparing representations \textit{across} different architectures, finding that models without the block structure show reasonable representation similarity in corresponding layers, but block structure representations are unique to each model.
    \item Finally, we look at how different depths and widths affect model outputs. We find that wide and deep models make systematically different mistakes at the level of individual examples. Specifically, on ImageNet, even when these networks achieve similar overall accuracy, wide networks perform slightly better on classes reflecting scenes, whereas deep networks are slightly more accurate on consumer goods. %
\end{itemize}
\cutsectionup
\section{Related Work}
\cutsectiondown
Neural network models of different depth and width have been studied through the lens of universal approximation theorems \cite{cybenko1989approximation,hornik1991approximation,pinkus1999approximation,lu2017expressive,hanin2017approximating,lin2018resnet} and functional expressivity \cite{telgarsky2015representation, raghu2017expressive}. However, this line of work only shows that such networks can be constructed, and provides neither a guarantee of learnability nor a characterization of their performance when trained on finite datasets. 
Other work has studied the behavior of neural networks in the infinite width limit by relating architectures to their corresponding kernels~\cite{matthews2018gaussian,lee2018deep,jacot2018neural}, but substantial differences exist between behavior in this infinite width limit and the behavior of finite-width networks~\cite{novak2018bayesian,wei2019regularization,chizat2019lazy,lewkowycz2020large}. In contrast to this theoretical work, we attempt to develop empirical understanding of the behavior of practical, finite-width neural network architectures after training on real-world data.

Previous empirical work has studied the effects of width and depth upon model accuracy in the context of convolutional neural network architecture design, finding that optimal accuracy is typically achieved by balancing width and depth~\cite{zagoruyko2016wide,tan2019efficientnet}. Further study of accuracy and error sets have been conducted in~\cite{hacohen2020lets} (error sets over training), and~\cite{hooker2019selective} (error after pruning).
Other work has demonstrated that it is often possible for narrower or shallower neural networks to attain similar accuracy to larger networks when the smaller networks are trained to mimic the larger networks' predictions~\cite{ba2014deep,romero2014fitnets}. We instead seek to study the impact of width and depth on network internal representations and (per-example) outputs, by applying techniques for measuring similarity of neural network hidden representations~\cite{kornblith2019similarity,raghu2017svcca,morcos2018insights}. These techniques have been very successful in analyzing deep learning, from properties of neural network training~\cite{gotmare2018closer,neyshabur2020being}, objectives~\cite{resnick2019probing,thompson2019effect,hermann2020shapes}, and dynamics \cite{maheswaranathan2019universality} to revealing hidden linguistic structure in large language models \cite{bau2018identifying,kudugunta2019investigating,wu2019emerging,wu2020similarity} and applications in neuroscience~\cite{shi2019comparison,li2019learning,merel2019deep,zhang2020transferability} and medicine~\cite{raghu2019transfusion}. 
\cutsectionup
\section{Experimental Setup and Background}
\cutsectiondown
Our goal is to understand the effects of depth and width on the function learned by the underlying neural network, in a setting representative of the high performance models used in practice. Reflecting this, our experimental setup consists of a family of ResNets \cite{he2016deep, zagoruyko2016wide} trained on standard image classification datasets CIFAR-10, CIFAR-100 and ImageNet. 

For standard CIFAR ResNet architectures, the network's layers are evenly divided between three stages (feature map sizes), with numbers of channels increasing by a factor of two from one stage to the next. We adjust the network's width and depth by increasing the number of channels and layers respectively in each stage, following \citet{zagoruyko2016wide}. For ImageNet ResNets, ResNet-50 and ResNet-101 architectures differ only by the number of layers in the third ($14\times 14$) stage. Thus, for experiments on ImageNet, we scale only the width or depth of layers in this stage. More details on training parameters, as well as the accuracies of all investigated models, can be found in Appendix \ref{app:training_details}.

We observe that increasing depth and/or width indeed yields better-performing models. However, we will show in the following sections how they exhibit characteristic differences in internal representations and outputs, beyond their comparable accuracies.

\cutsubsectionup
\subsection{Measuring Representational Similarity Using Minibatch CKA}\label{cka_explain}
\cutsubsectiondown

Neural network hidden representations are challenging to analyze for several reasons including (i) their large size; (ii) their distributed nature, where important features in a layer may rely on multiple neurons; and (iii) lack of alignment between neurons in different layers. Centered kernel alignment (CKA)~\cite{kornblith2019similarity,cortes2012algorithms} addresses these challenges, providing a robust way to quantitatively study neural network representations by computing the similarity between pairs of activation matrices. Specifically, we use linear CKA, which \citet{kornblith2019similarity} have previously validated for this purpose, and adapt it so that it can be efficiently estimated using minibatches. We describe both the conventional and minibatch estimators of CKA below.

Let $\mathbf{X} \in \mathbb{R}^{m \times p_1}$ and $\mathbf{Y} \in \mathbb{R}^{m \times p_2}$ contain representations of two layers, one with $p_1$ neurons and another $p_2$ neurons, to the same set of $m$ examples. Each element of the $m \times m$ Gram matrices $\bm{K} = \bm{X}\bm{X}^\mathsf{T}$ and $\bm{L} = \bm{Y}\bm{Y}^\mathsf{T}$ reflects the similarities between a pair of examples according to the representations contained in $\bm{X}$ or $\bm{Y}$. Let $\bm{H} = \bm{I}_n - \frac{1}{n}\bm{1}\bm{1}^\mathsf{T}$ be the centering matrix. Then $\bm{K}' = \bm{H}\bm{K}\bm{H}$ and $\bm{L}' = \bm{H}\bm{L}\bm{H}$ reflect the similarity matrices with their column and row means subtracted. HSIC measures the similarity of these centered similarity matrices by reshaping them to vectors and taking the dot product between these vectors, $\mathrm{HSIC}_0(\bm{K},\bm{L}) = \mathrm{vec}(\bm{K}') \cdot \mathrm{vec}(\bm{L}')/(m-1)^2$. HSIC is invariant to orthogonal transformations of the representations and, by extension, to permutation of neurons, but it is not invariant to scaling of the original representations. CKA further normalizes HSIC to produce a similarity index between 0 and 1 that is invariant to isotropic scaling,
\begin{align}
    \mathrm{CKA}(\bm{K}, \bm{L}) = \frac{\mathrm{HSIC}_0(\bm{K}, \bm{L})}{\sqrt{\mathrm{HSIC}_0(\bm{K}, \bm{K}) \mathrm{HSIC}_0(\bm{L}, \bm{L})}}.
\end{align}
\citet{kornblith2019similarity} show that, when measured between layers of architecturally identical networks trained from different random initializations, linear CKA reliably identifies architecturally corresponding layers, whereas several other proposed representational similarity measures do not.
However, naive computation of linear CKA requires maintaining the activations across the entire dataset in memory, which is challenging for wide and deep networks. To reduce memory consumption, we propose to compute linear CKA by averaging HSIC scores over $k$ minibatches:
\begin{align}
    \mathrm{CKA}_\mathrm{minibatch} &= \frac{\frac{1}{k} \sum_{i=1}^k \mathrm{HSIC}_1(\mathbf{X}_i\mathbf{X}_i^\mathsf{T}, \mathbf{Y}_i\mathbf{Y}_i^\mathsf{T})}{\sqrt{\frac{1}{k}  \sum_{i=1}^k \mathrm{HSIC}_1(\mathbf{X}_i\mathbf{X}_i^\mathsf{T}, \mathbf{X}_i\mathbf{X}_i^\mathsf{T})}\sqrt{\frac{1}{k} \sum_{i=1}^k \mathrm{HSIC}_1(\mathbf{Y}_i\mathbf{Y}_i^\mathsf{T}, \mathbf{Y}_i\mathbf{Y}_i^\mathsf{T})}},
\end{align}
where $\mathbf{X}_i \in \mathbb{R}^{n \times p_1}$ and $\mathbf{Y}_i \in \mathbb{R}^{n \times p_2}$ are now matrices containing activations of the $i$\textsuperscript{th} minibatch of $n$ examples sampled without replacement. In place of $\mathrm{HSIC}_0$, which is a biased estimator of HSIC, we use an unbiased estimator of HSIC~\cite{song2012feature} so that the value of CKA is independent of the batch size:
\begin{align}
    \mathrm{HSIC}_1(\mathbf{K},\mathbf{L}) &= \frac{1}{n(n-3)}\left(\text{tr}(\mathbf{\tilde K}\mathbf{\tilde L}) + \frac{\mathbf{1}^\mathsf{T}\mathbf{\tilde K}\mathbf{1}\mathbf{1}^\mathsf{T}\mathbf{\tilde L}\mathbf{1}}{(n-1)(n-2)} - \frac{2}{n-2}\mathbf{1}^\mathsf{T}\mathbf{\tilde K}\mathbf{\tilde L}\mathbf{1}\right),
\end{align}
where $\mathbf{\tilde K}$ and $\mathbf{\tilde L}$ are obtained by setting the diagonal entries of similarity matrices $\mathbf{K}$ and $\mathbf{L}$ to zero.

This approach of estimating HSIC based on minibatches is equivalent to the bagging block HSIC approach of \citet{yamada2018post}, and converges to the same value as if the entire dataset were considered as a single minibatch, as proven in Appendix~\ref{app:convergence}. We use minibatches of size $n = 256$ obtained by iterating over the test dataset 10 times, sampling without replacement within each time.

\cutsectionup
\section{Depth, Width and Model Internal Representations}
\cutsectiondown
We begin our study by investigating how the depth and width of a model architecture affects its internal representation structure. How do representations evolve through the hidden layers in different architectures? How similar are different hidden layer representations to each other? To answer these questions, we use the CKA representation similarity measure outlined in Section \ref{cka_explain}. 

We find that as networks become wider and/or deeper, their representations show a characteristic \textit{block structure}: many (almost) consecutive hidden layers that have highly similar representations. By training with reduced dataset size, we pinpoint a connection between block structure and model overparametrization --- block structure emerges in models that have large capacity relative to the training dataset.

\cutsubsectionup
\subsection{Internal Representations and the Block Structure}
\cutsubsectiondown
\begin{figure}[h]
\begin{center}
\includegraphics[trim=0 0 0 0,clip,width=\linewidth]{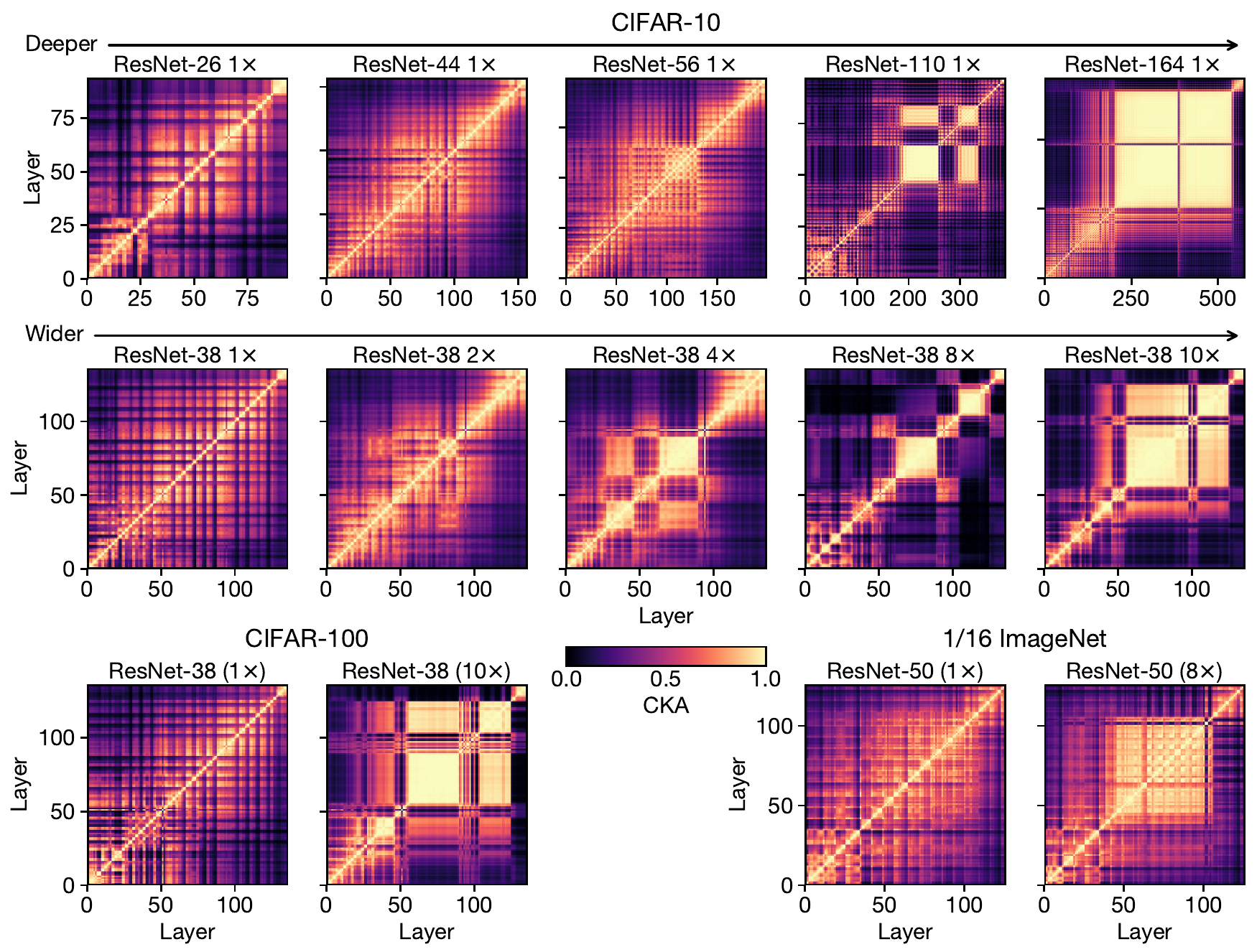}
\end{center}
\vskip -1em
\caption{\small \textbf{Emergence of the \textit{block structure} with increasing width or depth.} As we increase the depth or width of neural networks, we see the emergence of a large, contiguous set of layers with very similar representations --- the block structure. Each of the panes of the figure computes the CKA similarity between all pairs of layers in a single neural network and plots this as a heatmap, with x and y axes indexing layers. See Appendix Figure~\ref{fig:wide_deep_no_residual} for block structure in wide networks without residual connections.
}
\label{fig:emergence}
\vskip -1em
\end{figure}

In Figure \ref{fig:emergence}, we show the results of training ResNets of varying depths (top row) and widths (bottom row) on CIFAR-10. For each ResNet, we use CKA to compute the representation similarity of all pairs of layers within the same model. Note that the total number of layers is much greater than the stated depth of the ResNet, as the latter only accounts for the convolutional layers in the network but we include \textit{all} intermediate representations. We can visualize the result as a heatmap, with the x and y axes representing the layers of the network, going from the input layer to the output layer. 

The heatmaps start off as showing a checkerboard-like representation similarity structure, which arises because representations after residual connections are more similar to other post-residual representations than representations inside ResNet blocks. As the model gets wider or deeper, we see the emergence of a distinctive \textit{block structure} --- a considerable range of hidden layers that have very high representation similarity (seen as a yellow square on the heatmap). This block structure mostly appears in the later layers (the last two stages) of the network. We observe similar results in networks without residual connections (Appendix Figure~\ref{fig:wide_deep_no_residual}).

\textbf{Block structure across random seeds:} In Appendix Figure~\ref{fig:initializations}, we plot CKA heatmaps across multiple random seeds of a deep network and a wide network. We observe that while the exact size and position of the block structure can vary, it is present across all training runs.

\cutsubsectionup
\subsection{The Block Structure and Model Overparametrization}
\cutsubsectiondown
\begin{figure}[t]
\begin{center}
\includegraphics[trim=0 0 0 0,clip,width=\linewidth]{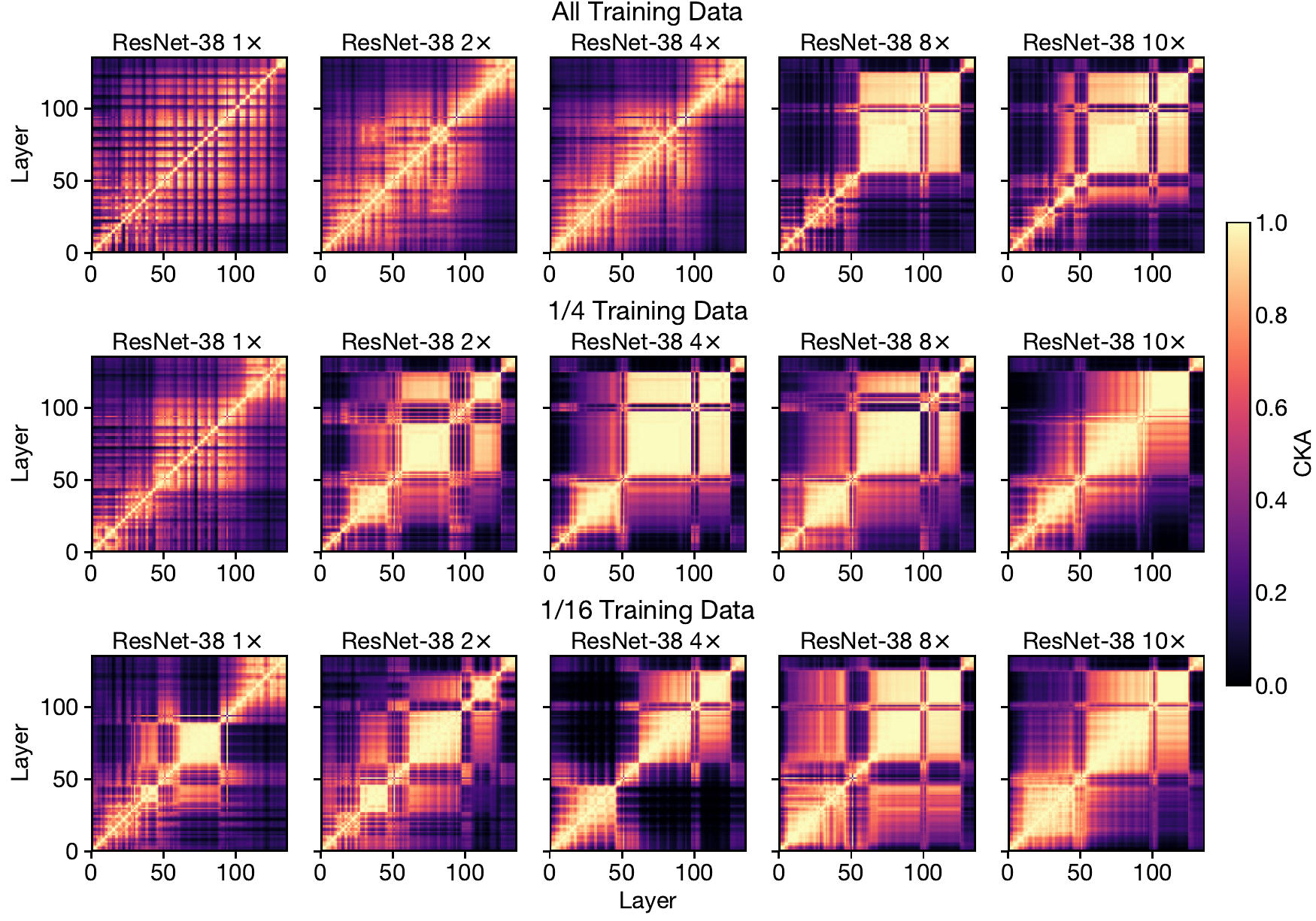}
\end{center}
\vskip -1.5em
\caption{\small \textbf{Block structure emerges in narrower networks when trained on less data.} We plot CKA similarity heatmaps as we increase network width (going right along each row) and also decrease the dataset size (down each column). As a result of the increased model capacity (with respect to the task) from smaller dataset size, smaller (narrower) models now also exhibit the block structure.}
\label{fig:subsampled2}
\vskip -1em
\end{figure}
Having observed that the block structure emerges as models get deeper and/or wider (Figure \ref{fig:emergence}), we next study whether block structure is a result of this increase in model capacity --- namely, %
is block structure connected to the \textit{absolute} model size, or to the size of the model \textit{relative} to the size of the training data? %

Commonly used neural networks have many more parameters than there are examples in their training sets. However, even within this overparameterized regime, larger networks frequently achieve higher performance on held out data~\cite{zagoruyko2016wide,tan2019efficientnet}. Thus, to explore the connection between \textit{relative} model capacity and the block structure, we fix a model architecture, but \textit{decrease} the training dataset size, which serves to inflate the relative model capacity. 

The results of this experiment with varying network widths are shown in Figure \ref{fig:subsampled2}, while the corresponding plot with varying network depths (which supports the same conclusions) can be found in Appendix Figure \ref{fig:subsampled}. Each column of Figure \ref{fig:subsampled2} shows the internal representation structure of a fixed architecture as the amount of training data is reduced, and we can clearly see the emergence of the block structure in narrower (lower capacity) networks as less training data is used. Refer to Figures \ref{fig:subsampled_cifar100} and \ref{fig:subsampled_width_cifar100} in the Appendix for a similar set of experiments on CIFAR-100. Together, these observations indicate that block structure in the internal representations arises in models that are heavily overparameterized relative to the training dataset.

\cutsectionup
\section{Probing the Block Structure}
\cutsectiondown
In the previous section, we show that wide and/or deep neural networks exhibit a block structure in the CKA heatmaps of their internal representations, and that this block structure arises from the large capacity of the models in relation to the learned task. While this latter result provides some insight into the block structure, there remains a key open question, which this section seeks to answer: what is happening to the neural network representations as they propagate through the block structure?

Through further analysis, we show that the block structure arises from the \textit{preservation} and \textit{propagation} of the first principal component of its constituent layer representations. Additional experiments with linear probes \cite{alain2016understanding} further support this conclusion and show that some layers that make up the block structure can be removed with minimal performance loss. %

\cutsubsectionup
\subsection{The Block Structure and The First Principal Component}
\cutsubsectiondown

For centered matrices of activations $\bm{X} \in \mathbb{R}^{n \times p_1}$, $\bm{Y} \in \mathbb{R}^{n \times p_2}$, linear CKA may be written as:
\begin{align}
    \mathrm{CKA}(XX^\text{T}, YY^\text{T}) &= \frac{\sum_{i=1}^{p_1} \sum_{j=1}^{p_2} \lambda_X^i \lambda_Y^j \langle \textbf{u}_X^i, \textbf{u}_Y^j\rangle^2}{\sqrt{\sum_{i=1}^{p_1} (\lambda_X^i)^2}\sqrt{\sum_{j=1}^{p_2} (\lambda_Y^j)^2}} 
\end{align}
where $\bm{u}_X^i \in \mathbb{R}^n$ and $\bm{u}_Y^i \in \mathbb{R}^n$ are the $i^\text{th}$ normalized principal components of $\bm{X}$ and $\bm{Y}$ and $\lambda_X^i$ and $\lambda_Y^i$ are the corresponding squared singular values \cite{kornblith2019similarity}. As the fraction of the variance explained by the first principal components approaches 1, CKA reflects the squared alignment between these components $\langle \textbf{u}_X^1, \textbf{u}_Y^1\rangle^2$. We find that, in networks with a visible block structure, the first principal component explains a large fraction of the variance, whereas in networks with no visible block structure, it does not (Appendix Figure~\ref{fig:pca_variance}), suggesting that the block structure reflects the behavior of the first principal component of the representations.
\begin{figure}[t]
\begin{center}
\includegraphics[trim=0 0 0 0,clip,width=\linewidth]{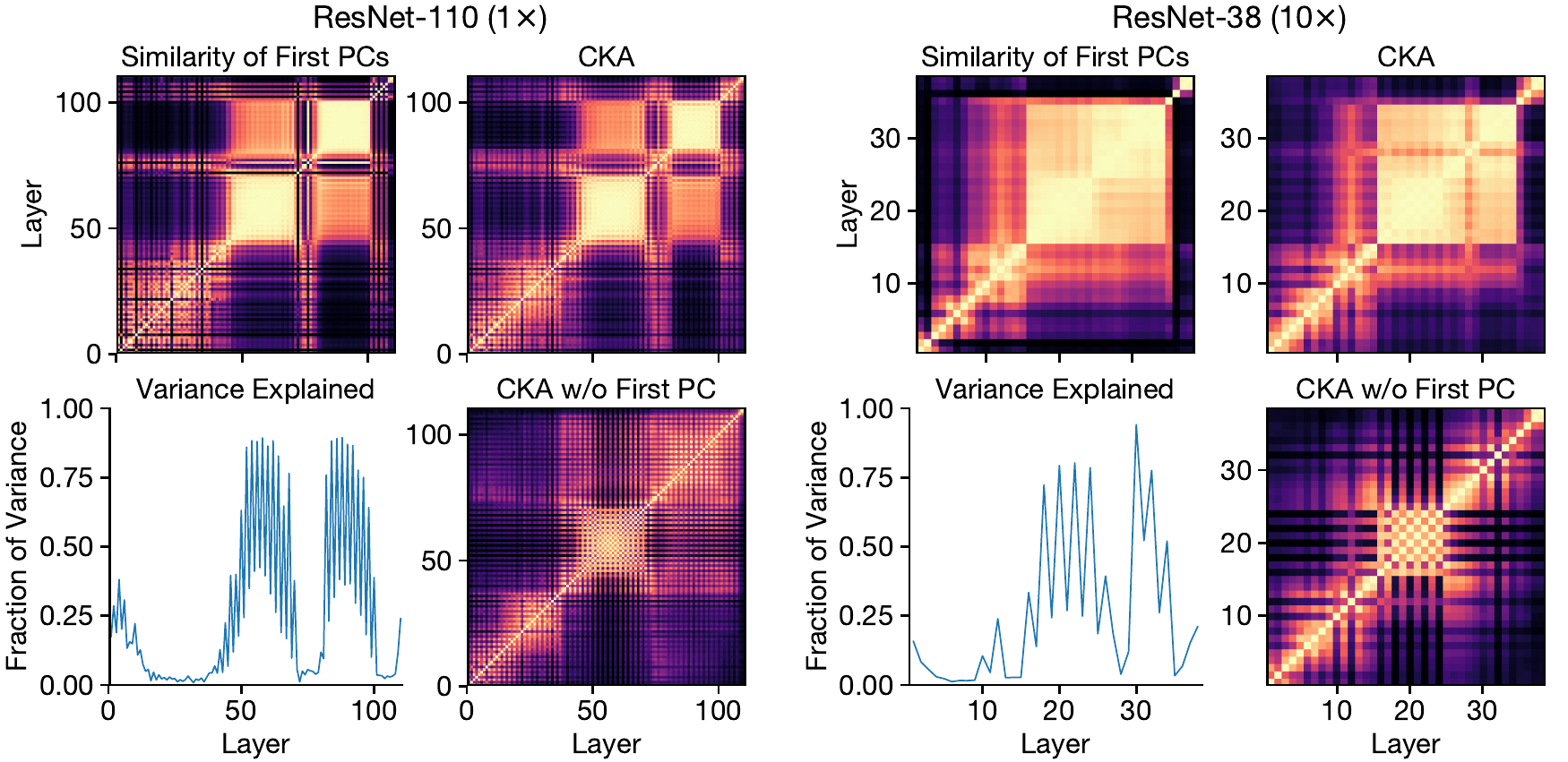}
\end{center}
\vskip -1em
\caption{\small \textbf{Block structure arises from preserving and propagating the (dominant) first principal component of the layer representations.} 
Above are two sets of four plots, for layers of a deep network (left) and a wide network (right). CKA of the representations (top right), shows block structure in both networks. 
By comparing this to the variance explained by the top principal component of each layer representation (bottom left), we see that layers in the block structure have a highly dominant first principal component. This principal component is also preserved throughout the block structure, seen by comparing the squared cosine similarity of the first principal component across pairs of layers (top left), to the CKA representation similarity (top right). Compared to the latter, after removing the first principal component from the representations (bottom right), the block structure is highly reduced --- the block structure arises from propagating the first principal component.
}
\label{fig:pca}
\vskip -1em
\end{figure}

Figure \ref{fig:pca} explores this relationship between the block structure and the first principal components of the corresponding layer representations, demonstrated on a deep network (left group) and a wide network (right group). By comparing the variance explained by the first principal component (bottom left) to the location of the block structure (top right) we observe that layers belonging to the block structure have a highly dominant first principal component. Cosine similarity of the first principal components across all pairs of layers (top left) also shows a similarity structure resembling the block structure (top right), further demonstrating that the principal component is preserved throughout the block structure. Finally, removing the first principal component from the representations nearly eliminates the block structure from the CKA heatmaps (bottom right). A full picture of how this process impacts models of increasing depth and width can be found in Appendix Figure~\ref{fig:wide_deep_remove_pc}. 

In contrast, for models that do not contain the block structure, we find that cosine similarity of the first principal components across all pairs of layers bears little resemblance to the representation similarity structure measured by CKA, and the fractions of variance explained by the first principal components across all layers are relatively small (see Appendix Figure \ref{fig:no_block_structure_pca}).
Together these results demonstrate that the block structure arises from preserving and propagating the first principal component across its constituent layers.

Although layers inside the block structure have representations with high CKA and similar first principal components, each layer nonetheless computes a nonlinear transformation of its input. Appendix Figure~\ref{fig:sparsity} shows that the sparsity of ReLU activations inside and outside of the block structure is similar. In particular, ReLU activations in the block structure are sometimes in the linear regime and sometimes in the saturating regime, just like activations elsewhere in the network.

\cutsubsectionup
\subsection{Linear Probes and Collapsing the Block Structure}
\cutsubsectiondown

With the insight that the block structure is preserving key components of the representations, we next investigate how these preserved representations impact task performance throughout the network, and whether the block structure can be collapsed in a way that minimally affects performance.
\begin{figure}[t]
\vskip -0.5em
\begin{minipage}[c]{0.7142857\textwidth}
\includegraphics[trim=0 0 0 0,clip,width=\linewidth]{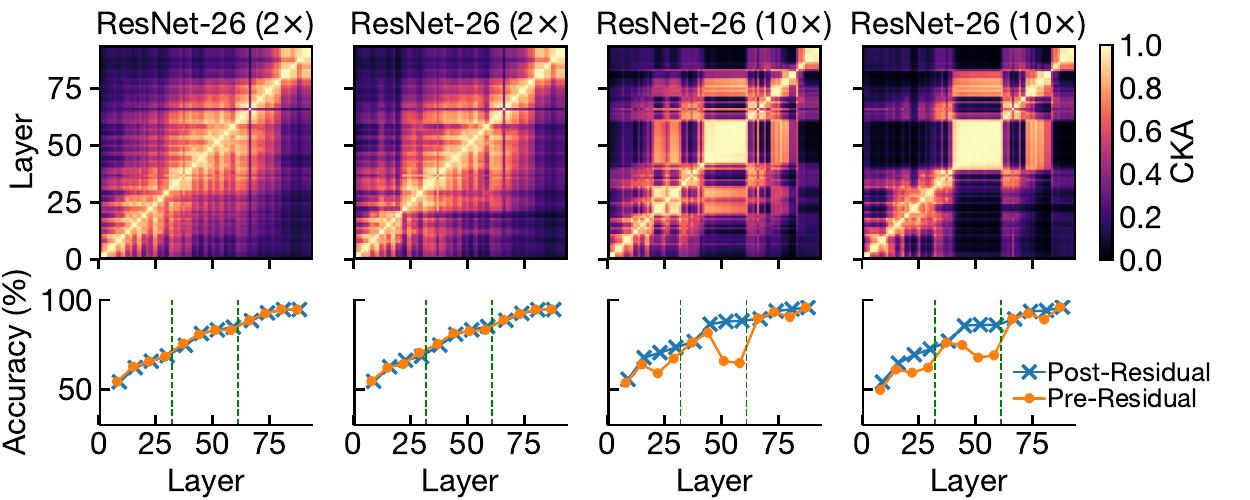}
\end{minipage}\hfill
\begin{minipage}{0.28\textwidth}
\caption{\textbf{Linear probe accuracy.} Top: CKA between layers of individual ResNet models, for different architectures and initializations. Bottom: Accuracy of linear probes for each of the layers before (orange) and after (blue) the residual connections.}
\label{fig:linear_probe}
\end{minipage}
\vskip -1em
\end{figure}
\begin{figure}[t]
\begin{minipage}[c]{0.7142857\textwidth}
\includegraphics[trim=0 0 0 0,clip,width=\linewidth]{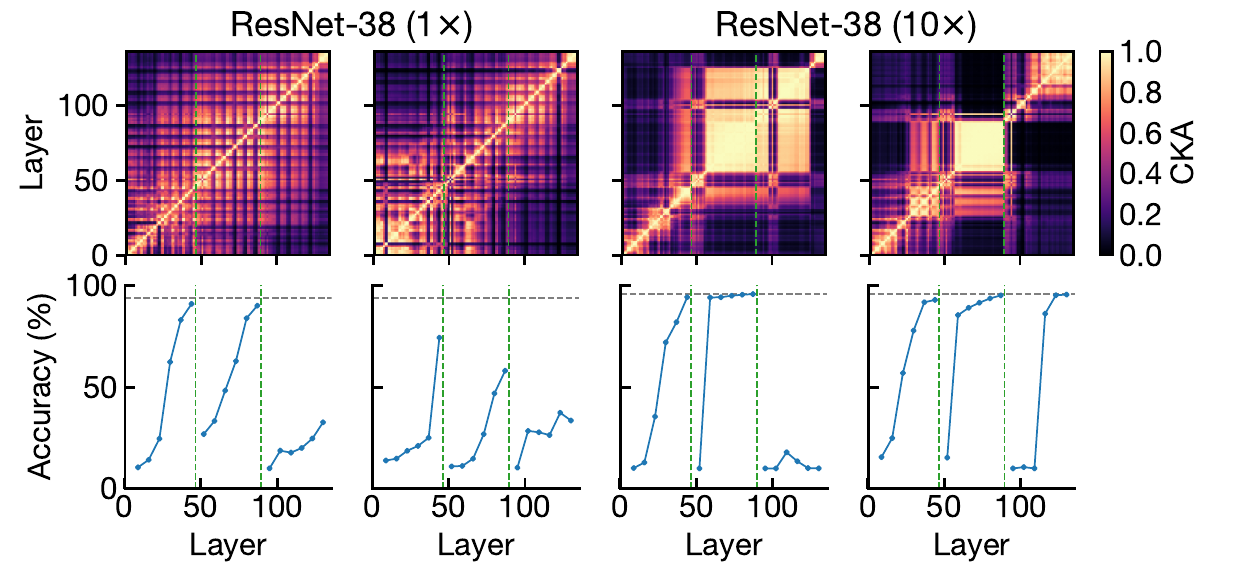}
\end{minipage}\hfill
\begin{minipage}{0.28\textwidth}
\caption{\small \textbf{Effect of deleting blocks on accuracy for models with and without block structure.} Blue lines show the effect of deleting blocks backwards one-by-one within each ResNet stage. (Note the plateau at the block structure.) Vertical green lines reflect boundaries between ResNet stages. Horizontal gray line reflects accuracy of the full model.}
\label{fig:multiple_deletions}
\end{minipage}
\vskip -1em
\end{figure}

In Figure \ref{fig:linear_probe}, we train a linear probe \cite{alain2016understanding} for each layer of the network, which maps from the layer representation to the output classes. In models without the block structure (first 2 panes), we see a monotonic increase in accuracy throughout the network, but in models with the block structure (last 2 panes), linear probe accuracy shows little improvement inside the block structure. Comparing the accuracies of probes for layers pre- and post-residual connections, we find that these connections play an important role in preserving representations in the block structure.

Informed by these results, we proceed to pruning blocks one-by-one from the end of each residual stage, while keeping the residual connections intact, and find that there is little impact on test accuracy when blocks are dropped from the middle stage (Figure \ref{fig:multiple_deletions}), unlike what happens in models without block structure. When compared across different seeds, the magnitude of the drop in accuracy appears to be connected to the size and the clarity of the block structure present. This result suggests that block structure could be an indication of redundant modules in model design, and that the similarity of its constituent layer representations could be leveraged for model compression.

\cutsectionup
\section{Depth and Width Effects on Representations Across Models}
\cutsectiondown
The results of the previous sections help characterize effects of varying depth and width on a (single) model's internal representations, specifically, the emergence of the block structure with increased capacity, and its impacts on how representations are propagated through the network. With these insights, we next look at how depth and width affect the hidden representations \textit{across} models. Concretely, are learned representations similar across models of different architectures and different random initializations? How is this affected as model capacity is changed?

\begin{figure}[t]
\begin{center}
\includegraphics[trim=0 0 0 0,clip,width=\linewidth]{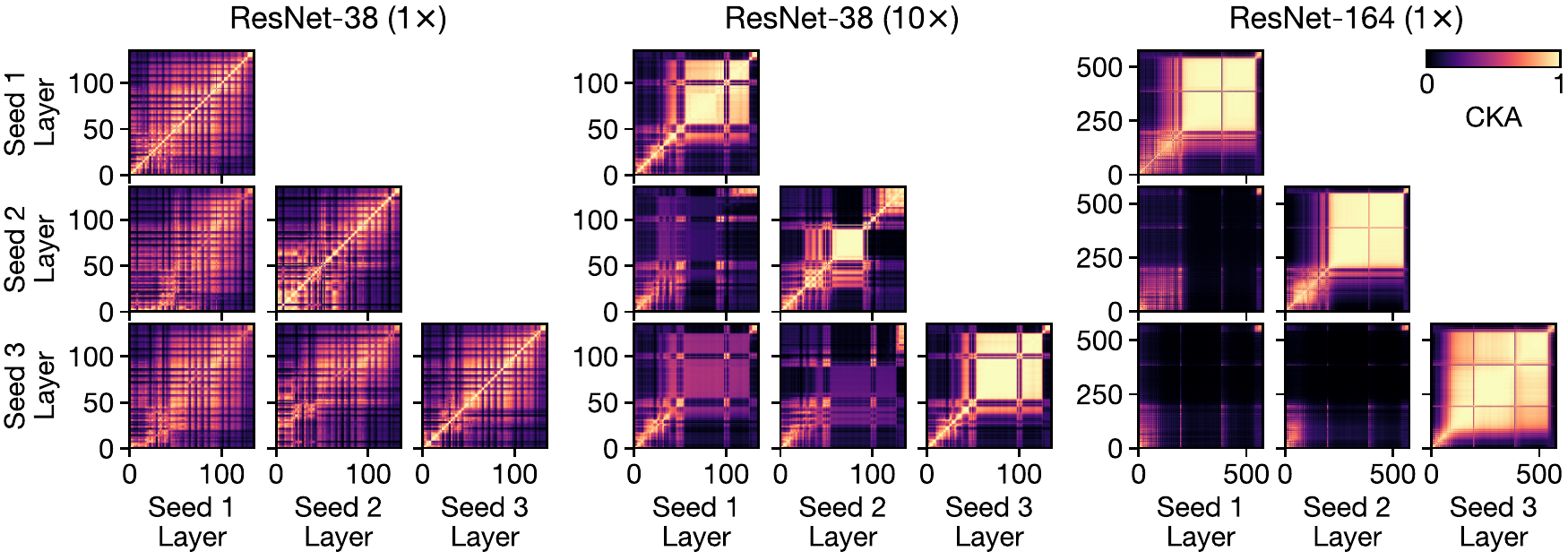}
\end{center}
\vskip -1em
\caption{\textbf{Representations within ``block structure'' differ across initializations.} Each group of plots shows CKA between layers of models with the same architecture but different initializations (off the diagonal) or within a single model (on the diagonal). For narrow, shallow models such as ResNet-38 (1$\times$), there is no block structure, and CKA across initializations closely resembles CKA within a single model. For wider (middle) and deeper (right) models, representations within the block structure are unique to each model. %
}
\label{fig:across_initializations}
\vskip -1em
\end{figure}

We begin by studying the variations in representations across different training runs of the same model architecture. Figure \ref{fig:across_initializations} illustrates CKA heatmaps for a smaller model (left), wide model (middle) and deep model (right), trained from random initializations. The smaller model does not have the block structure, and representations across seeds (off diagonal plots) exhibit the same grid-like similarity structure as within a single model. The wide and deep models show block structure in all their seeds (as seen in plots along the diagonal), and comparisons across seeds (off-diagonal plots) show that while layers not in the block structure exhibit some similarity, the block structure representations are highly dissimilar across models.

Appendix Figure \ref{fig:across_architectures} shows results of comparing CKA \textit{across} different architectures, controlled for accuracy. Wide and deep models without the block structure do exhibit representation similarity with each other, with corresponding layers broadly being of the same \textit{proportional} depth in the model. However, similar to what we observe in Figure \ref{fig:across_initializations}, the block structure representations remain unique to each model.

\section{Depth, Width and Effects on Model Predictions}
\label{sec:predictions}

\begin{figure}[t]
\begin{center}
\includegraphics[trim=0 0 0 0,clip,width=\linewidth]{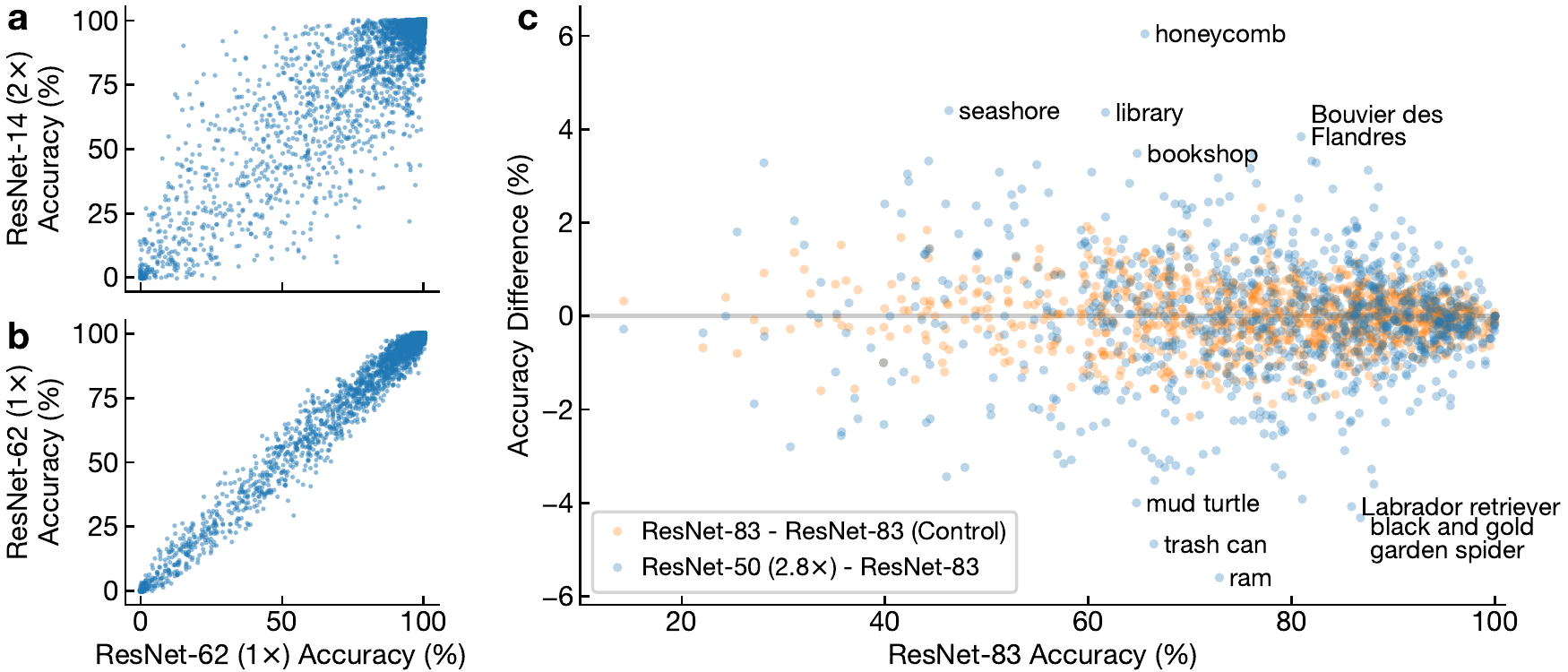}
\end{center}
\vskip -1em
\caption{\textbf{Systematic per-example and per-class performance differences between wide and deep models.} \textbf{a}: Comparison of accuracy on individual examples for 100 ResNet-62 ($1\times$) and ResNet-14 ($2\times$) models, which have statistically indistinguishable accuracy on the CIFAR-10 test set. \textbf{b}: Same as (a), for disjoint sets of 100 architecturally identical ResNet-62 models trained from different initializations. See Figure~\ref{fig:example_accuracy_cifar10} for a similar plot for ResNet-14 ($2\times$) models. \textbf{c}: Accuracy differences on ImageNet classes for ResNets between models with increased width (y-axis) or depth (x-axis) in the third stage. Orange dots reflect difference between two sets of 50 architecturally identical deep models (i.e., different random initializations of ResNet-83).}
\label{fig:example_accuracy}
\vskip -1em
\end{figure}

To conclude our investigation on the effects of depth and width, we turn to understanding how the characteristic properties of internal representations discussed in the previous sections influence the outputs of the model. How diverse are the predictions of different architectures? Are there examples that wide networks are more likely to do well on compared to deep networks, and vice versa?

By training populations of networks on CIFAR-10 and ImageNet, we find that there is considerable diversity in output predictions at the \textit{individual example} level, and broadly, architectures that are more similar in structure have more similar output predictions. On ImageNet we also find that there are statistically significant differences in class-level error rates between wide and deep models, with the former exhibiting a small advantage in identifying classes corresponding to scenes over objects.

Figure~\ref{fig:example_accuracy}a compares per-example accuracy for groups of 100 architecturally identical deep models (ResNet-62) and wide models (ResNet-14 (2$\times$)), all trained from different random initializations on CIFAR-10. Although the \textit{average} accuracy of these groups is statistically indistinguishable, they tend to make different errors, and differences between groups are substantially larger than expected by chance (Figure~\ref{fig:example_accuracy}b). Examples of images with large accuracy differences are shown in Appendix Figure~\ref{fig:example_accuracy_cifar10}, while %
Appendix Figures~\ref{fig:example_accuracy_cifar10_depth} and~\ref{fig:example_accuracy_cifar10_width} further explore patterns of example accuracy for networks of different depths and widths, respectively. As the architecture becomes wider or deeper, accuracy on many examples increases, and the effect is most pronounced for examples where smaller networks were often but not always correct. At the same time, there are examples that larger networks are \textit{less} likely to get right than smaller networks. We show similar results for ImageNet networks in Appendix Figure~\ref{fig:example_accuracy_imagenet}.

We next ask whether wide and deep ImageNet models have systematic differences in accuracy at the class level. As shown in Figure~\ref{fig:example_accuracy}c, there are small but statistically significant differences in accuracy for $419 / 1000$ classes ($p < 0.05$, Welch's $t$-test), accounting for 11\% of the variance in the differences in example-level accuracy (see Appendix~\ref{app:class_effect_size}). Three of the top 5 classes that are more likely to be correctly classified by wide models reflect scenes rather than objects (seashore, library, bookshop). Indeed, the wide architecture is significantly more accurate on the 68 ImageNet classes descending from ``structure'' or ``geological formation'' ($74.9\% \pm 0.05$ vs. $74.6\% \pm 0.06$, $p = 6 \times 10^{-5}$, Welch's t-test). Looking at synsets containing $>50$ ImageNet classes, the deep architecture is significantly more accurate on the 62 classes descending from ``consumer goods'' ($72.4\% \pm 0.07$ vs. $72.1\% \pm 0.06$, $p = 0.001$;  Table~\ref{tab:imagenet_synset_diffs}).
In other parts of the hierarchy, differences are smaller; for instance, both models achieve 81.6\% accuracy on the 118 dog classes ($p = 0.48$). %

\cutsectionup
\section{Conclusion}
\cutsectiondown
In this work, we study the effects of width and depth on neural network representations. Through experiments on CIFAR-10, CIFAR-100 and ImageNet, we have demonstrated that as either width or depth increases relative to the size of the dataset, analysis of hidden representations reveals the emergence of a characteristic \textit{block structure} that reflects the similarity of a dominant first principal component, propagated across many network hidden layers. Further analysis finds that while the block structure is unique to each model, other learned features are shared across different initializations and architectures, particularly across relative depths of the network. Despite these similarities in representational properties and performance of wide and deep networks, we nonetheless observe that width and depth have different effects on network predictions at the example and class levels. There remain interesting open questions on how the block structure arises through training, and using the insights on network depth and width to inform optimal task-specific model design.

\cutsectionup
\section*{Acknowledgements}
\cutsectiondown

We thank Gamaleldin Elsayed for helpful feedback on the manuscript.

\bibliography{bibliography.bib}
\bibliographystyle{iclr2021_conference}

\counterwithin{figure}{section}
\counterwithin{table}{section}
\clearpage
\appendix
\part*{Appendix}
\section{Convergence of Minibatch HSIC}
\label{app:convergence}
\begin{proposition}
Let $\bm{K} \in \mathbb{R}^{m \times m}$ and $\bm{L} \in \mathbb{R}^{m \times m}$ be two kernel matrices constructed by applying kernel functions $k$ and $l$ respectively to all pairs of examples in a dataset $\mathcal{D}$. Form $c$ random partitionings $p$ of $\mathcal{D}$ into $m/n$ minibatches $b$ of size $n$, and let $\tilde{\bm{K}}^{b,p} \in \mathbb{R}^{n \times n}$ and $\tilde{\bm{L}}^{b,p} \in \mathbb{R}^{n \times n}$ be kernel matrices constructed by applying kernels $k$ and $l$ to all pairs of examples within each minibatch. Define $U_0 = \mathrm{HSIC}_1(\bm{K}, \bm{L})$, the value of $\mathrm{HSIC}_1$ applied to the full dataset, and $\tilde U_p = \frac{n}{m} \sum_{b=1}^{m / n} \mathrm{HSIC}_1(\bm{K}^{b,p}, \bm{L}^{b,p})$, the average value of $\mathrm{HSIC}_1$ over the minibatches in partitioning (epoch) $p$. Then $\frac{1}{c} \sum_{p=1}^c \tilde U_p \xrightarrow{P} U_0$ as $c \to \infty$.
\end{proposition}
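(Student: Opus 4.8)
The plan is to prove the result in two moves: first show that each single‑partitioning average $\tilde U_p$ is an \emph{exactly unbiased} estimator of $U_0$, and then observe that the $\tilde U_p$ are i.i.d.\ across the $c$ partitionings, so that the conclusion is an instance of the weak law of large numbers.

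The key structural fact to establish is that $\mathrm{HSIC}_1$ is a U‑statistic of order $4$ in the kernel entries. Concretely, I would exhibit a function $h$, symmetric under permutations of its four arguments and depending only on the entries of $\bm{K}$ and $\bm{L}$ indexed by a quadruple $(i,j,q,r)$ of distinct points, such that for every index set $S$
\begin{align}
\mathrm{HSIC}_1(\bm{K}_S, \bm{L}_S) = \binom{|S|}{4}^{-1} \sum_{\{i,j,q,r\} \subseteq S} h(i,j,q,r),
\end{align}
where $\bm{K}_S, \bm{L}_S$ denote the submatrices on $S$. This is essentially the form in which the unbiased estimator was originally derived (Song et al.); the trace / quadratic‑form expression in the main text is an algebraic rearrangement of it, and verifying that the rearrangement is exact --- tracking the $\frac{1}{n(n-3)}$, $\frac{1}{(n-1)(n-2)}$ and $\frac{2}{n-2}$ factors, and the effect of zeroing the diagonals --- is the one genuinely computational step.

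Given that representation, unbiasedness of $\tilde U_p$ is pure combinatorics. In a uniformly random partitioning of $\mathcal{D}$ into $m/n$ blocks of size $n$, each block is marginally a uniformly random $n$‑subset $S$ of $\mathcal{D}$, so each fixed $4$‑subset of $\mathcal{D}$ lies in $S$ with probability $\binom{m-4}{n-4}/\binom{m}{n}$; substituting into the U‑statistic form and simplifying gives $\mathbb{E}[\mathrm{HSIC}_1(\bm{K}_S,\bm{L}_S)] = \binom{m}{4}^{-1}\sum_{\{i,j,q,r\}\subseteq\mathcal{D}} h(i,j,q,r) = \mathrm{HSIC}_1(\bm{K},\bm{L}) = U_0$. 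Since $\tilde U_p = \frac{n}{m}\sum_{b} \mathrm{HSIC}_1(\bm{K}^{b,p},\bm{L}^{b,p})$, linearity of expectation --- which does \emph{not} require the blocks within a single partitioning to be independent --- yields $\mathbb{E}[\tilde U_p] = \frac{n}{m}\cdot\frac{m}{n}\cdot U_0 = U_0$.

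Finally, the partitionings are generated independently, so $\tilde U_1,\dots,\tilde U_c$ are i.i.d.; each is a bounded function of finitely many bounded kernel entries, hence has finite variance. The weak law of large numbers then gives $\frac{1}{c}\sum_{p=1}^c \tilde U_p \xrightarrow{P} \mathbb{E}[\tilde U_1] = U_0$, which is the claim. I expect the only real obstacle to be the bookkeeping in the second paragraph (exhibiting $h$ and matching it to the closed‑form estimator); the probabilistic content is elementary. As a corollary, writing $\mathrm{CKA}_\mathrm{minibatch}$ as a ratio of three such averages and applying the continuous mapping theorem shows that $\mathrm{CKA}_\mathrm{minibatch}$ also converges in probability to the value of linear CKA computed on the entire dataset.
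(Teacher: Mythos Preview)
Your proposal is correct and follows essentially the same route as the paper: both invoke the U-statistic representation of $\mathrm{HSIC}_1$ from Song et al., compute $\mathbb{E}[\tilde U_p]=U_0$ by a symmetry/counting argument over 4-tuples, and then apply the law of large numbers across i.i.d.\ partitionings. The only cosmetic difference is that you first show $\mathbb{E}[\mathrm{HSIC}_1(\bm{K}_S,\bm{L}_S)]=U_0$ for a single block and then use linearity, whereas the paper writes the whole double sum with indicators $\delta_S^b$ at once; your added CKA corollary via the continuous mapping theorem is a nice bonus not stated in the paper's proof.
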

\begin{proof}
Let $\bm{i}_4^m$ be the set of all 4-tuples of indices between 1 and m where each index occurs exactly once. As proven in Theorem 3 of \citet{song2012feature}, $U_0$ is a U-statistic:
\begin{align}
    U_0 = \mathrm{HSIC}_1(\bm{K}, \bm{L}) = \frac{(m-4)!}{m!} \sum_{S \in \bm{i}_4^m} h(K_S, L_S),
\end{align}
where $K_{(i, j, q, r)} = (K_{i,j}, K_{i,q}, K_{i,r}, K_{j, q}, K_{j, r}, K_{q,r})$ and the kernel of the U-statistic $h$ is defined in \citet{song2012feature}. Let 
$\delta_S^b$ be 1 if the 4-tuple of dataset indices $S$ is selected in minibatch $b$ and 0 otherwise. Then:
\begin{align}
    \label{eq:up}
    \tilde U_p &= \frac{(n-4)!}{n!} \frac{n}{m} \sum_{b=1}^{m / n} \sum_{S \in \bm{i}_4^m} \delta_S^b h(K_S, L_S).
\end{align}
Taking the expectation with respect to $\bm{\delta}$, and noting that $\bm{\delta}$ is independent of $h(K_S, L_S)$,
\begin{align}
    \mathbb{E}_{\bm{\delta}}[\tilde U_p] &= \frac{(n-4)!}{n!} \frac{n}{m} \sum_{b=1}^{m / n} \sum_{S \in \bm{i}_4^m} \mathbb{E}_{\bm{\delta}}\left[\delta_S^b h(K_S, L_S)\right]\\
    &= \frac{(n-4)!}{n!} \frac{n}{m} \sum_{b=1}^{m / n} \sum_{S \in \bm{i}_4^m} \mathbb{E}_{\bm{\delta}}\left[\delta_S^b\right]h(K_S, L_S).
\end{align}
By symmetry, $\mathbb{E}_{\bm{\delta}}\left[\delta_S^b\right]$ is the same for all example and batch indices. Specifically, there are $n!/(n-4)!$ 4-tuples that can be formed from each batch and $m!/(m-4)!$ 4-tuples that can be formed from the entire dataset, so the probability that a given 4-tuple is in a given batch is $\mathbb{E}_{\bm{\delta}}\left[\delta_S^b\right] = (n!/(n-4)!)/(m!/(m-4)!)$. Thus:
\begin{align}
    \mathbb{E}_{\bm{\delta}}[\tilde U_p] &= \frac{(m-4)!}{m!}\sum_{S \in \bm{i}_4^m} h(K_S, L_S) = U_0.
\end{align}
The minibatch indicators $\delta_S^b$ are either 0 or 1, so their variances and covariances are bounded, and the weighted sum in Eq.~\ref{eq:up} has finite variance. Thus, by the law of large numbers, $\frac{1}{c}\sum_{p=1}^c \tilde U_p \xrightarrow{P} U_0$ as $p \to \infty$.
\end{proof}

\section{Training Details}
Our CIFAR-10 and CIFAR-100 networks follow the same architecture as \citet{he2016deep,zagoruyko2016wide}. We train a set of models where we fix the width multiplier of deep networks to 1 and experiment with models of depths 32, 44, 56, 110, 164. On CIFAR-100, the block structure only appears at a greater depth so we also include depths 218 and 224 in our investigation. For wide networks, we examine width multipliers of 1, 2, 4, 8 and 10 and depths of 14, 20, 26, and 38. We use SGD with momentum of 0.9, together with a cosine decay learning rate schedule and batch size of 128, to train each model for 300 epochs. Models are trained with standard CIFAR-10 data augmentation comprising random flips and translations of up to 4 pixels. Each depth and width configuration is trained with 10 different seeds for CKA analysis, and 200 seeds for model predictions comparison.

On ImageNet, we start with the ResNet-50 architecture and increase depth or width in the third stage only, following the scaling approach of \cite{he2016deep}. We train for 120 epochs using SGD with momentum of 0.9 and a cosine decay learning rate schedule at a batch size of 256. We use 100 seeds for model prediction comparison.

For experiments with reduced dataset size, we subsample the training data from the original CIFAR training set by the corresponding proportion, keeping the number of samples for each class the same. All CKA results are then computed based on the full CIFAR test set.

\label{app:training_details}
\begin{table}[h]
\caption{\textbf{Accuracy of examined neural networks on CIFAR-10 and CIFAR-100.}}
\begin{center}
 \begin{tabular}{cccc} 
 \hline
 Depth & Width & \multicolumn{1}{p{3cm}}{\centering CIFAR-10 Test Accuracy (\%)} & \multicolumn{1}{p{3cm}}{\centering CIFAR-100 Test Accuracy (\%)} \\
 \hline
 32 & 1 & 93.5 & 71.2 \\ 
 44 & 1 & 94.0 & 72.0 \\
 56 & 1 & 94.2 & 73.3 \\
 110 & 1 & 94.3 & 74.0 \\
 164 & 1 & 94.4 & 73.9 \\
 \hline
 14 & 1 & 92.0 & 67.8 \\ 
 14 & 2 & 94.1 & 72.9 \\
 14 & 4 & 95.4 & 77.0 \\
 14 & 8 & 95.9 & 80.0 \\
 14 & 10 & 96.0 & 80.2 \\ 
 \hline
 20 & 1 & 92.8 & 69.4 \\ 
 20 & 2 & 94.6 & 74.4 \\
 20 & 4 & 95.4 & 77.6 \\
 20 & 8 & 96.0 & 80.2 \\
 20 & 10 & 95.8 & 80.8 \\ 
 \hline
 26 & 1 & 93.3 & 70.5 \\ 
 26 & 2 & 94.9 & 75.8 \\
 26 & 4 & 95.6 & 79.3 \\
 26 & 8 & 95.9 & 80.9 \\
 26 & 10 & 95.8 & 81.0 \\ 
 \hline
 38 & 1 & 93.8 & 72.3\\ 
 38 & 2 & 95.1 & 75.9 \\
 38 & 4 & 95.5 & 78.6 \\
 38 & 8 & 95.7 & 79.8 \\
 38 & 10 & 95.7& 80.5 \\
\label{tab:accuracy}
\end{tabular}
\end{center}
\end{table}

\FloatBarrier
\clearpage
\section{Block Structure in a Different Architecture}
\begin{figure}[h]
\begin{center}
\includegraphics[trim=0 0 0 0,clip,width=\linewidth]{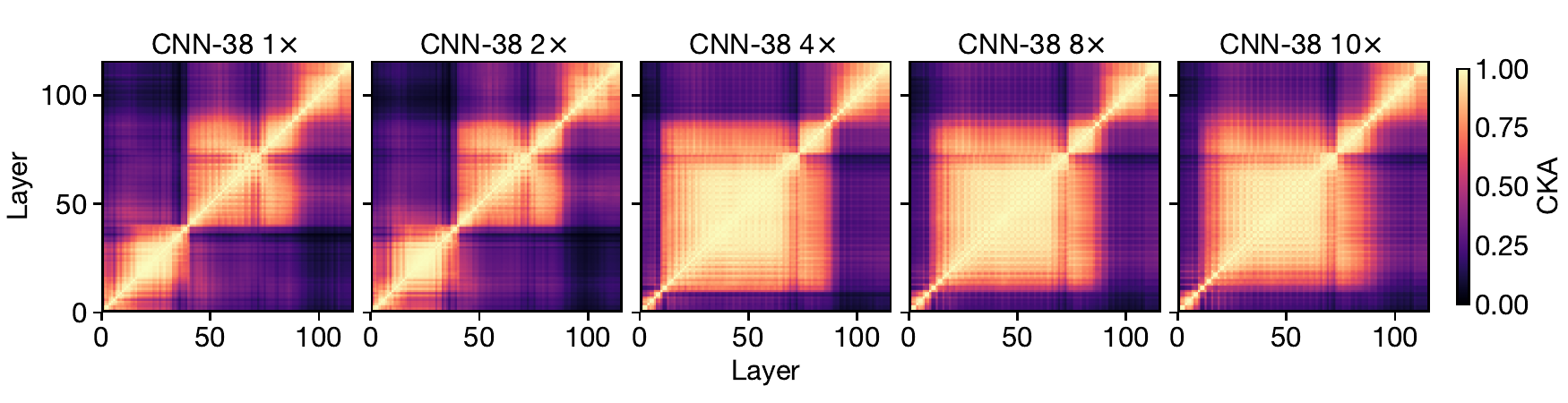}
\end{center}
\vskip -1.5em
\caption{\small \textbf{Block structure also appears in models without residual connections.} We remove residual connections from existing CIFAR-10 ResNets and plot CKA heatmaps for layers in the resulting architecture after training. Since the lack of residual connections prevents deep networks from performing well on the task, here we only show the representational similarity for models of increasing width. As previously observed in Figure \ref{fig:emergence}, the block structure emerges in higher capacity models.}
\label{fig:wide_deep_no_residual}
\end{figure}

\section{Probing the Block Structure}
\begin{figure}[h]
\begin{center}
\includegraphics[trim=0 0 0 0,clip,width=\linewidth]{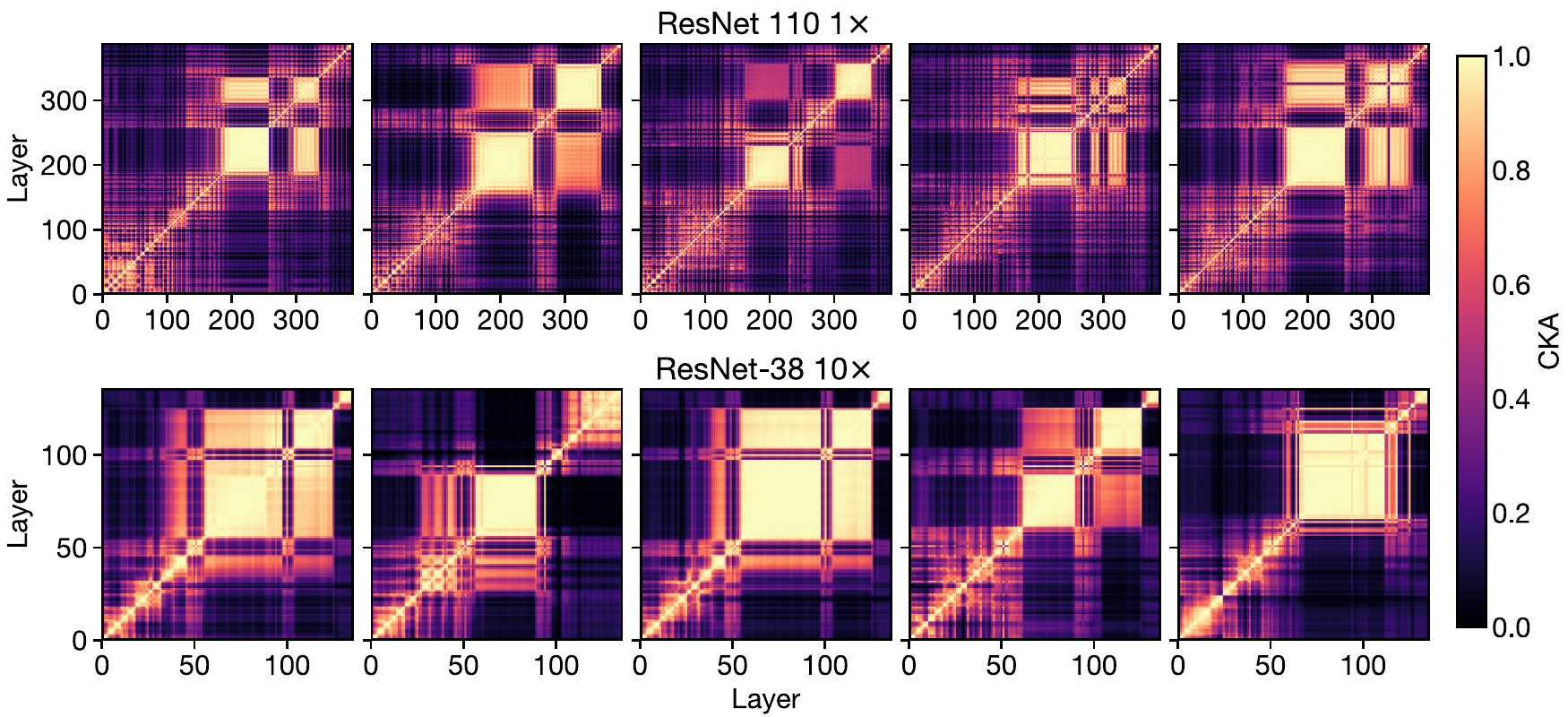}
\end{center}
\vskip -1em
\caption{\small \textbf{Block structure varies across random initializations.} We plot CKA heatmaps as in Figure \ref{fig:emergence} for 5 random seeds of a deep model (top row) and a wide model (bottom row) trained on CIFAR-10. While the size and position vary, the block structure is clearly visible in all seeds.}
\label{fig:initializations}
\end{figure}

\begin{figure}[h]
\begin{center}
\includegraphics[trim=0 0 0 0,clip,width=0.96\linewidth]{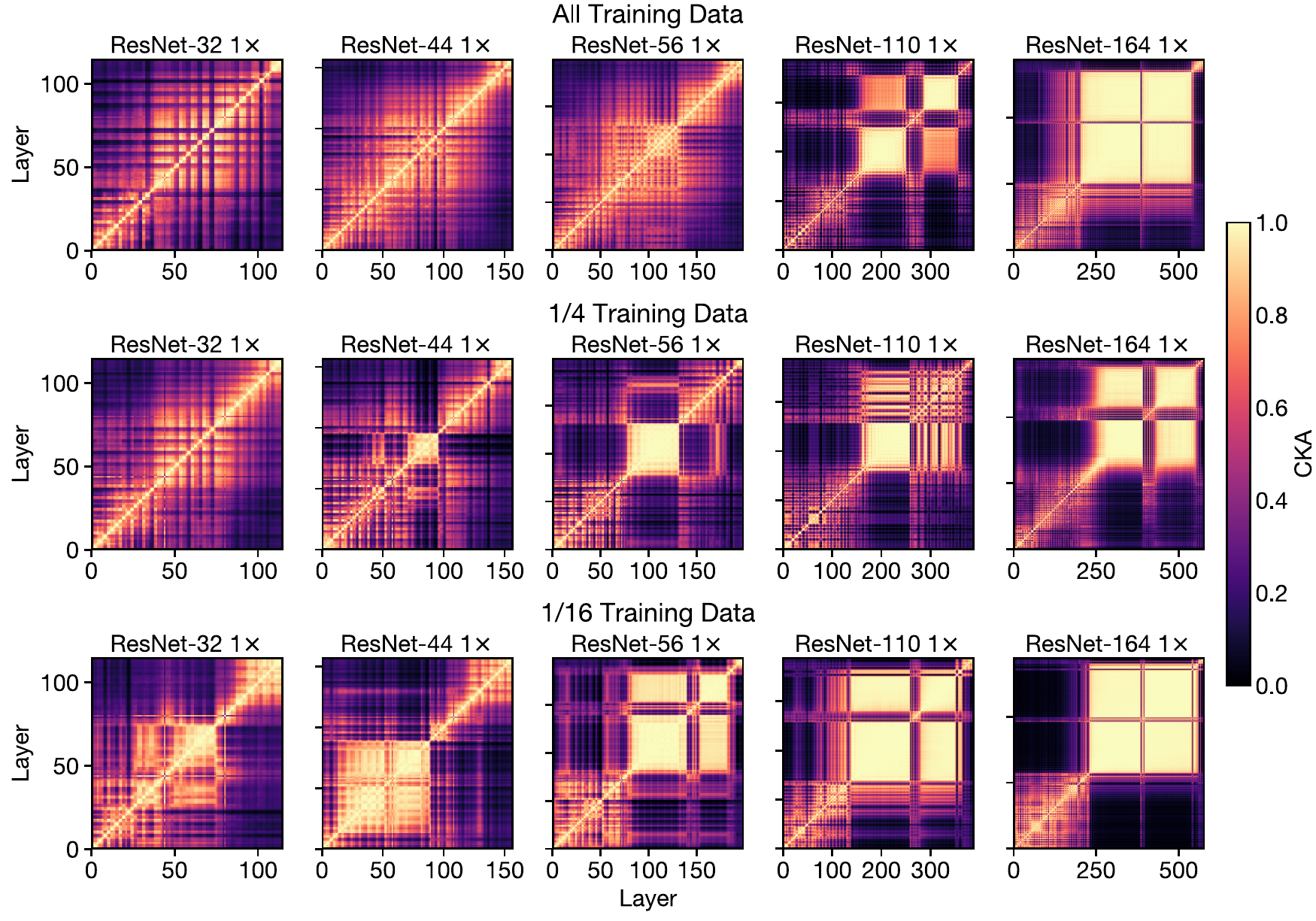}
\end{center}
\vskip -1.5em
\caption{\textbf{Block structure emerges in shallower networks when trained on less data (CIFAR-10).} We plot CKA similarity heatmaps as we increase network depth (going right along each row) and also decrease the size (down each column) of training data. Similar to the observation made in Figure \ref{fig:subsampled2}, as a result of the increased model capacity (with respect to the task) from smaller dataset size, smaller (shallower) models now also exhibit the block structure.}
\label{fig:subsampled}
\end{figure}

\begin{figure}[h]
\begin{center}
\includegraphics[trim=0 0 0 0,clip,width=0.96\linewidth]{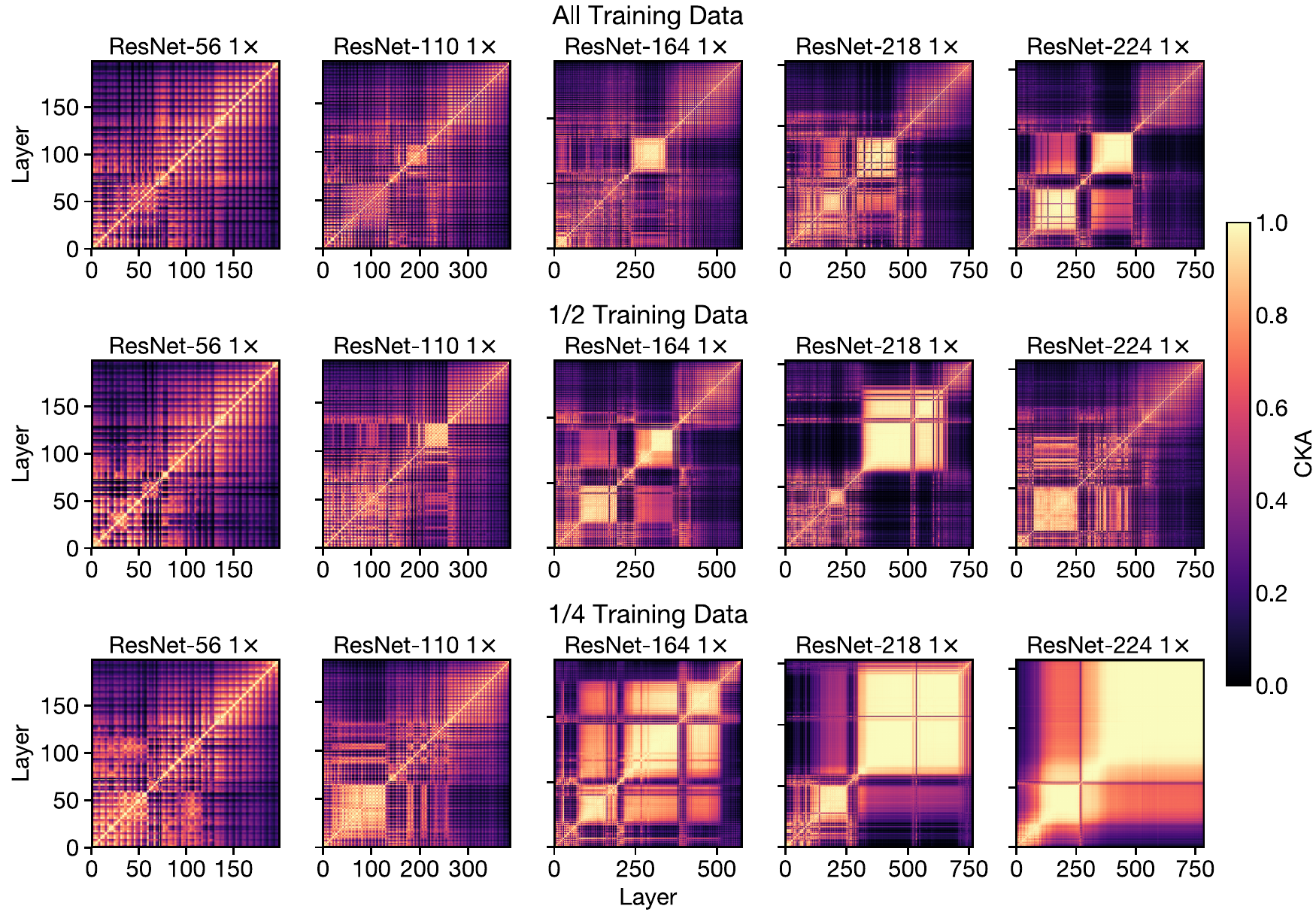}
\end{center}
\vskip -1.5em
\caption{\textbf{Block structure emerges in shallower networks when trained on less data (CIFAR-100).} We plot CKA similarity heatmaps as we increase network depth (going right along each row) and also decrease the size of training data (down each column). Similar to the observation made in Figure \ref{fig:subsampled2}, as a result of the increased model capacity (with respect to the task) from smaller dataset size, smaller (shallower) models now also exhibit the block structure.}
\label{fig:subsampled_cifar100}
\end{figure}

\begin{figure}[h]
\begin{center}
\includegraphics[trim=0 0 0 0,clip,width=0.96\linewidth]{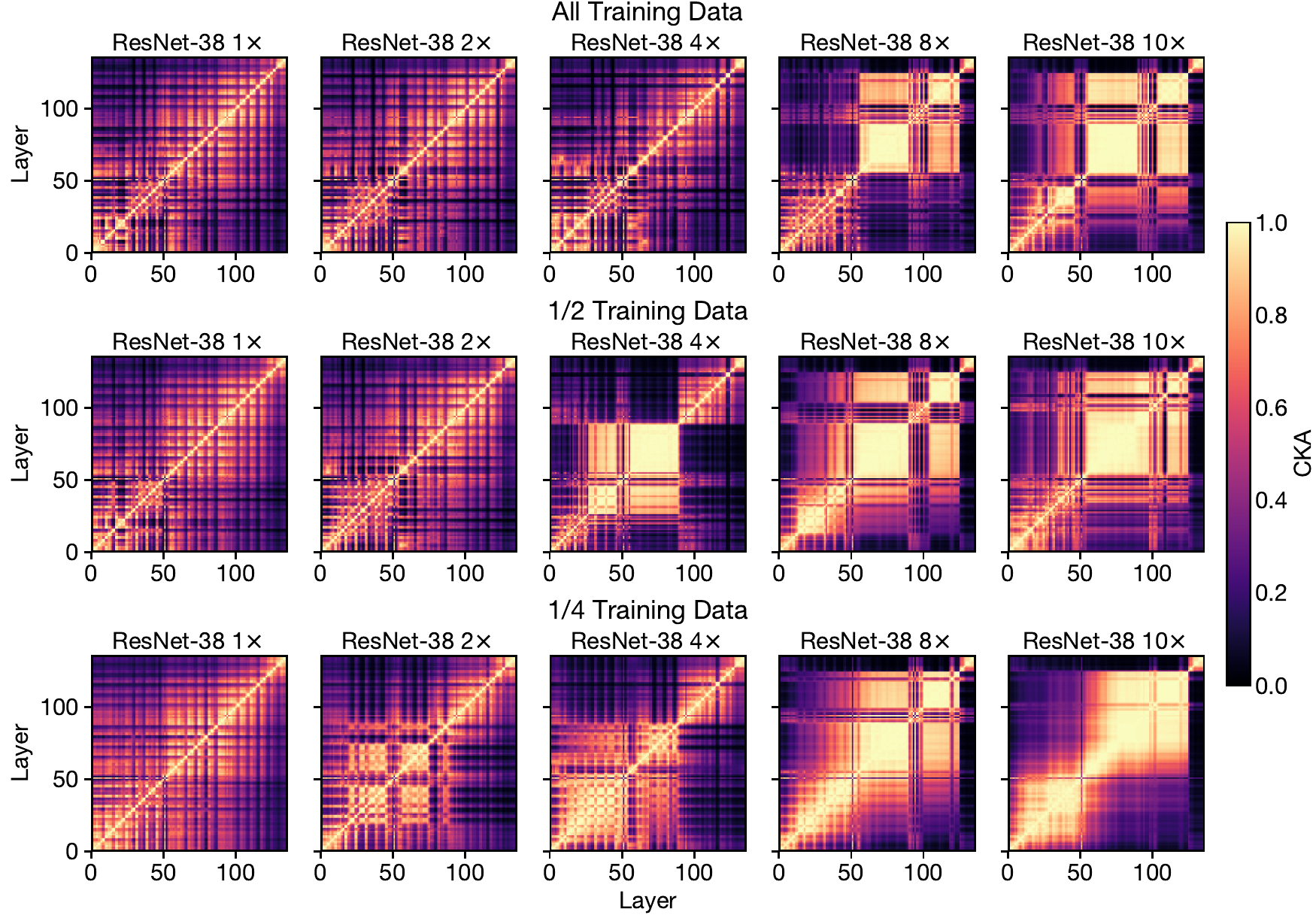}
\end{center}
\vskip -1.5em
\caption{\textbf{Block structure emerges in narrower networks when trained on less data (CIFAR-100).} We plot CKA similarity heatmaps as we increase network width (going right along each row) and also decrease the size (down each column) of training data. Similar to the observation made in Figure \ref{fig:subsampled2}, as a result of the increased model capacity (with respect to the task) from smaller dataset size, smaller (narrower) models now also exhibit the block structure.}
\label{fig:subsampled_width_cifar100}
\end{figure}

\begin{figure}[h]
\begin{center}
\includegraphics[trim=0 0 0 0,clip,width=0.7857\linewidth]{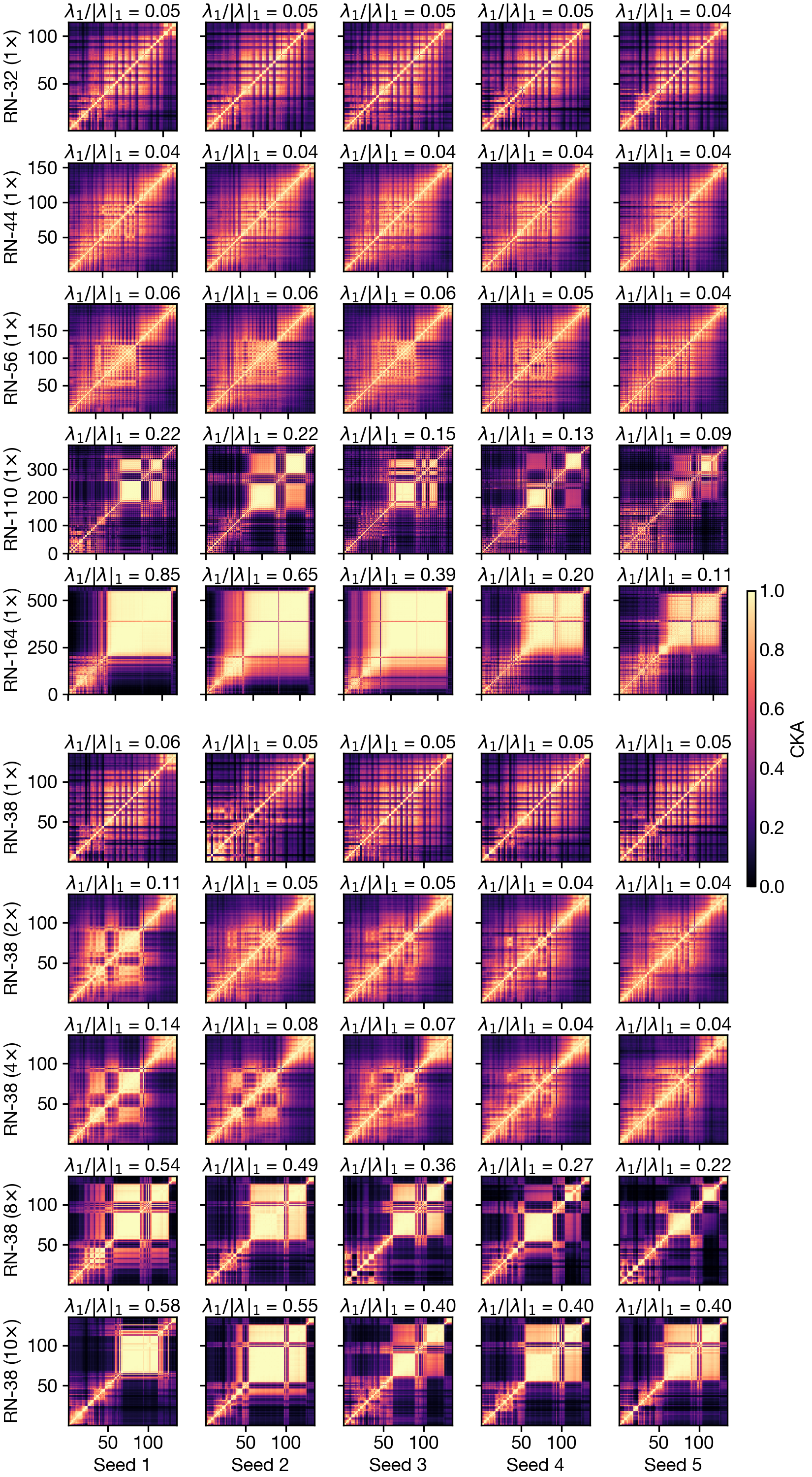}
\end{center}
\vskip -1em
\caption{\textbf{Top principal component explains a large fraction of variance in the activations of models with block structure.}  Each row shows a different model configuration that is trained on CIFAR-10, with the first 5 rows showing models of increasing depth, and the last 5 rows models of increasing width. Columns correspond to different seeds. Each heatmap is labeled with the fraction of variance explained by the top principal component of activations combined from the last 2 stages of the model (where block structure is often found). Rows (seeds belonging to the same architecture) are sorted by decreasing value of the proportion of variance explained. We observe that this variance measure is significantly higher in model seeds where the block structure is present.}
\label{fig:pca_variance}
\end{figure}

\begin{figure}[h]
\begin{center}
\includegraphics[trim=0 0 0 0,clip,width=\linewidth]{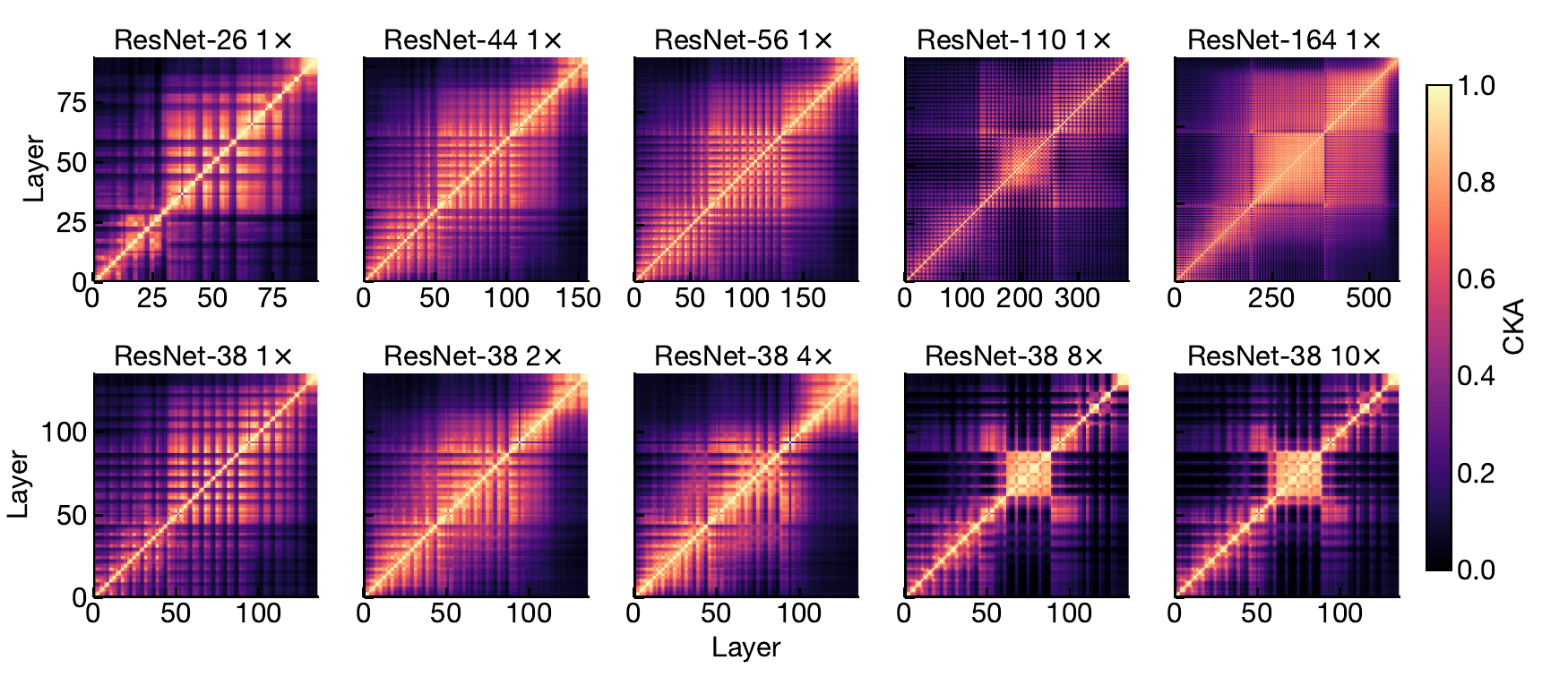}
\end{center}
\vskip -1em
\caption{\textbf{How the representational structure evolves with increasing depth and width when the first principal component is removed.} We plot CKA similarity heatmaps as models become deeper (top row) and wider (bottom row), with the top principal component of their internal representations removed. Compared to Figure \ref{fig:emergence}, we observe that while this process significantly eliminates the block structure in large capacity models (as also shown in Figure \ref{fig:pca}), it has negligible impact on the representational structures of smaller models (where no block structure is present). The latter is not surprising, since the first principal component doesn't account for a large fraction of the variance in representations of these models, as demonstrated in Appendix Figure \ref{fig:pca_variance} above.
}
\label{fig:wide_deep_remove_pc}
\end{figure}

\begin{figure}[h]
\begin{center}
\includegraphics[trim=0 0 0 0,clip,width=\linewidth]{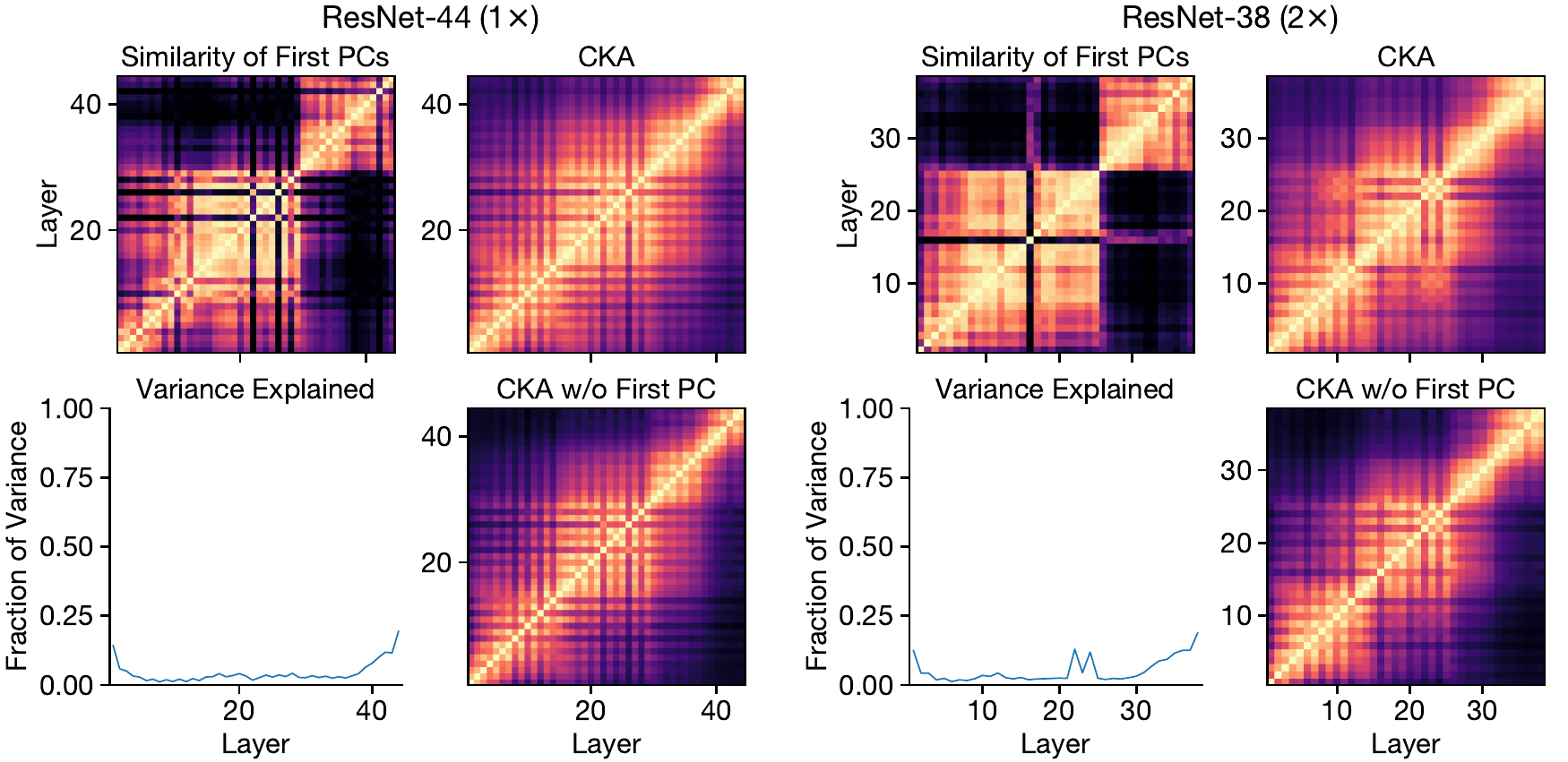}
\end{center}
\vskip -1em
\caption{\textbf{Relationship between the representation similarity structure and the first principal component in networks without block structure.} Above are two sets of four plots, for layers of a deep network (left) and a wide network (right), that don't contain block structure. In contrast to Figure \ref{fig:pca}, the first principal component of each layer representation only accounts for a small fraction of variance in the representation (bottom left). Comparing the squared cosine similarity of the first principal component across pairs of layers (top left), to the CKA representation similarity (top right), we find that these two structures don't resemble each other. Last but not least, the representational structure of each model remains mostly unchanged after the first principal component is removed (bottom right).
}
\label{fig:no_block_structure_pca}
\end{figure}

\begin{figure}[h]
\begin{center}
\includegraphics[trim=0 0 0 0,clip,width=\linewidth]{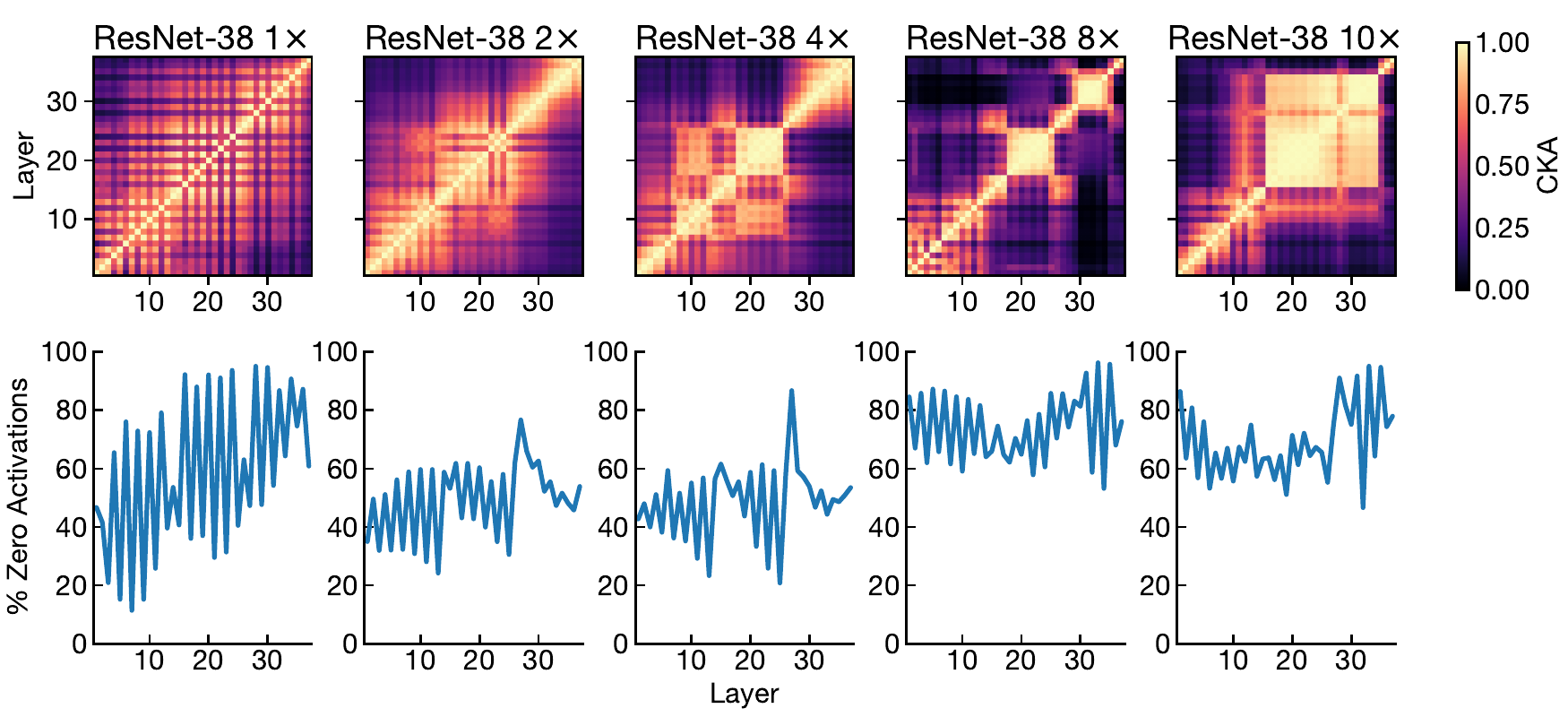}
\end{center}
\vskip -1em
\caption{\textbf{ReLU activations inside and outside the block structure are similarly sparse.} To rule out the possibility that the block structure arises because layers inside it behave linearly, we measured the sparsity of the ReLU activations. We observe that a significant proportion of activations are always non-zero, and the proportion is similar inside and outside the block structure. Thus, although layers inside the block structure have similar representations, each layer still applies a nonlinear transformation to its input.}
\label{fig:sparsity}
\end{figure}

\FloatBarrier
\section{Representations Across Models}
\begin{figure}[h]
\begin{center}
\includegraphics[trim=0 0 0 0,clip,width=0.8\linewidth]{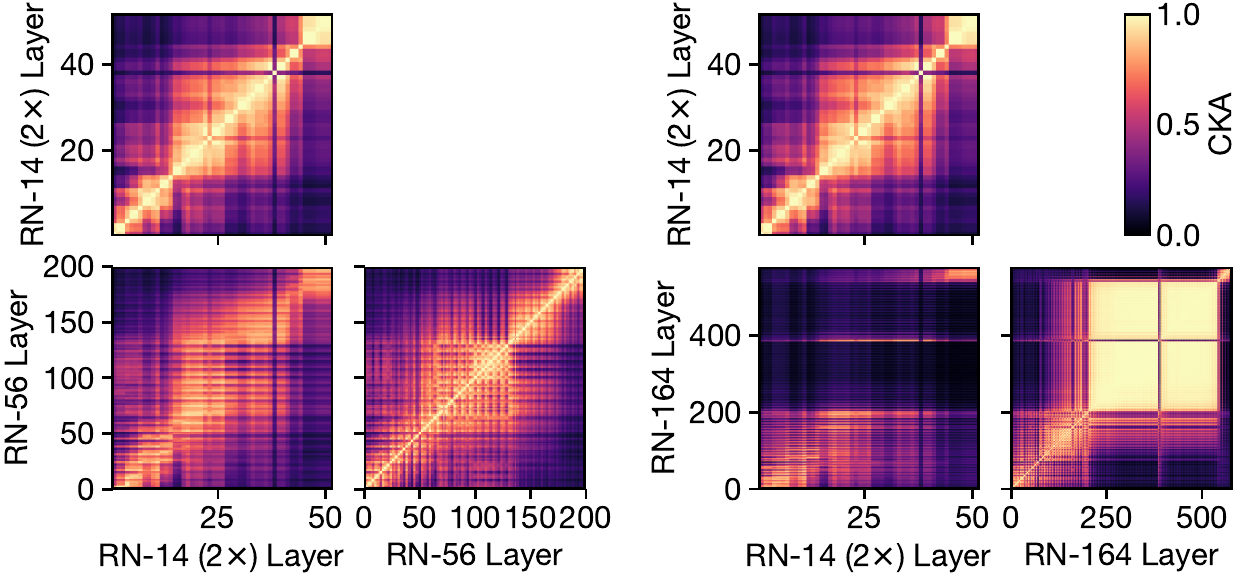}
\end{center}
\caption{\textbf{Representations align between models of different widths and depths when no block structure is present.} In each group of heatmaps, top left and bottom right show CKA within a single model trained on CIFAR-10. Bottom left shows CKA for all pairs of layers between these (non-architecturally-identical) models, which have similar test performance. In the absence of block structure (left group), representations at the same relative depths are similar across models. But when comparison involves models with block structure (right group), representations within the block structure are dissimilar to those of the other model.}
\label{fig:across_architectures}
\end{figure}

\FloatBarrier
\section{Example- and Class-Level Accuracy Differences}
\subsection{Effect of Varying Width and Depth on CIFAR-10 Predictions}
\begin{figure}[h]
\begin{center}
\includegraphics[trim=0 5 0 0,clip,width=\linewidth]{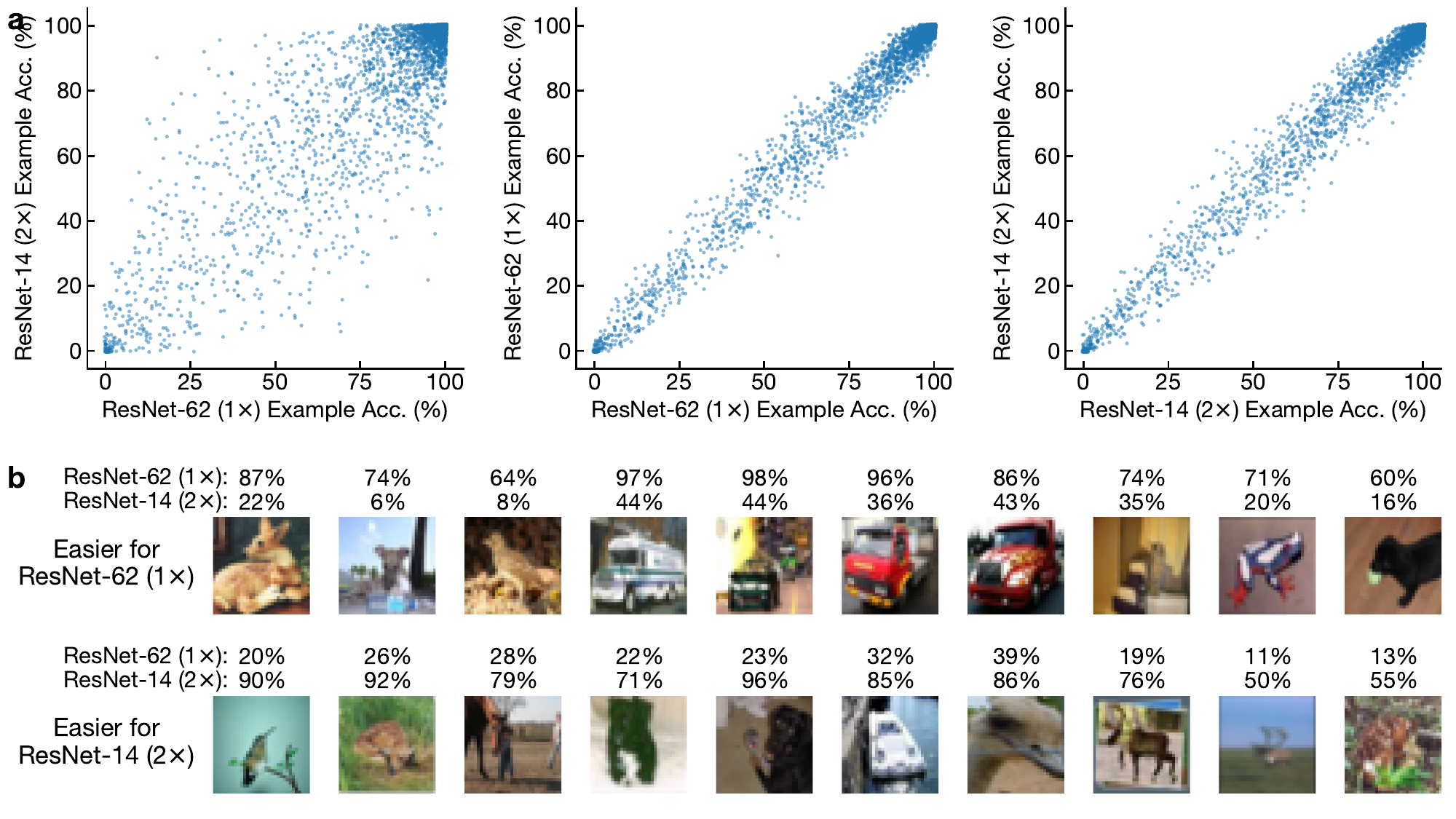}
\end{center}
\caption{\textbf{Systematic per-example performance differences between wide and deep models on CIFAR-10.} Comparison of predictions of 200 ResNet-62 ($1\times$) and ResNet-14 ($2\times$) models, which have statistically indistinguishable accuracy on the CIFAR-10 test set (mean $\pm$ SEM $94.09 \pm 0.01$ vs. $94.08 \pm 0.01$, $t(199) = 0.73$, $p = 0.47$). \textbf{a} Scatter plots of per-example accuracy for 100 ResNet-14 ($2\times$) models vs. 100 ResNet-62 ($1\times$) models (left) show substantially higher dispersion than corresponding plots for disjoint sets of 100 architecturally identical models trained from different initializations (middle and right). \textbf{b}: Examples with the highest accuracy differences between the two types of models. Accuracies are reported on a subset of models that is disjoint from those used to select the examples.}
\label{fig:example_accuracy_cifar10}
\end{figure}

\begin{figure}[h]
\begin{center}
\includegraphics[trim=0 5 0 0,clip,width=\linewidth]{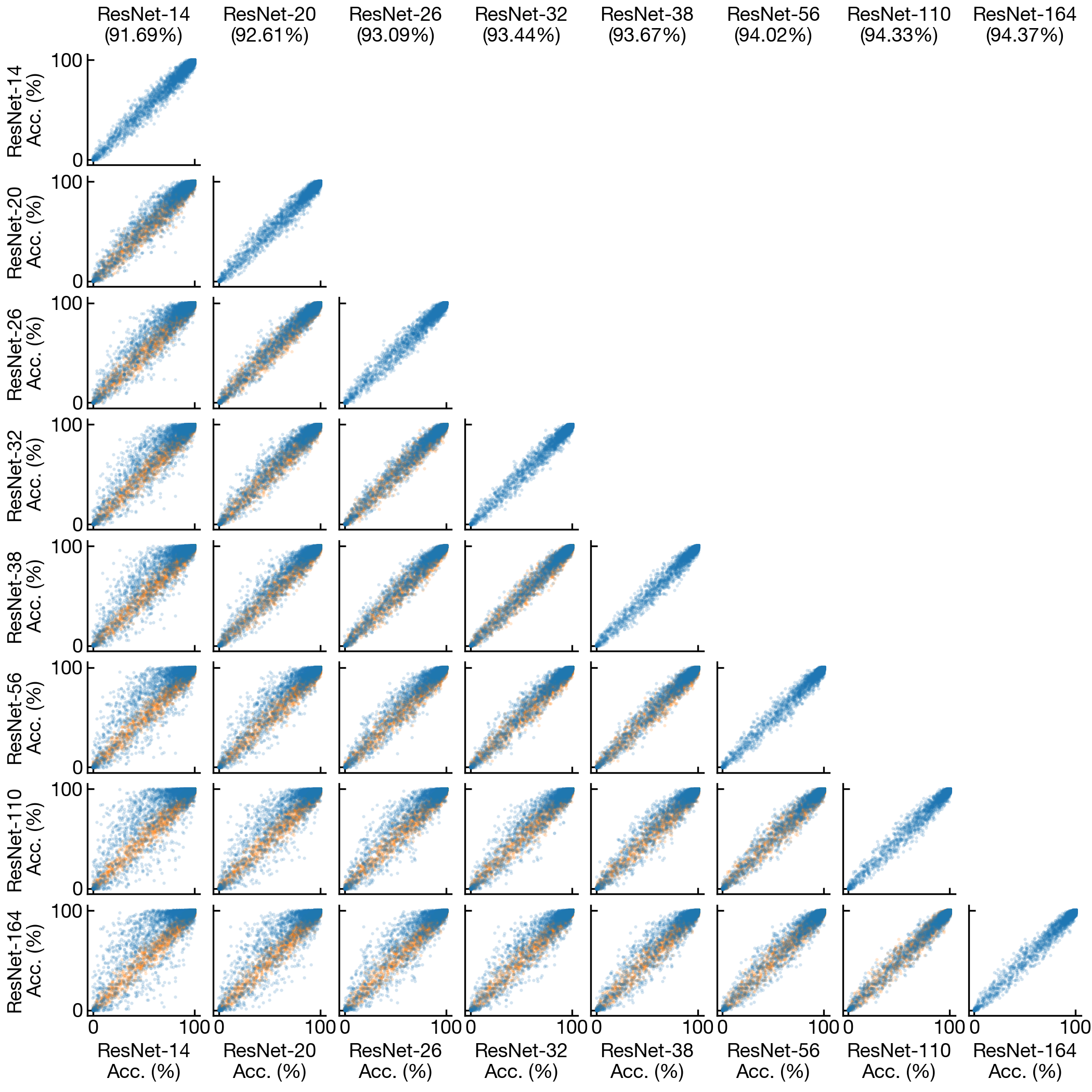}
\end{center}
\caption{\textbf{Effect of depth on example accuracy.} Scatter plots of per-example accuracies of ResNet models with different depths on CIFAR-10. Blue dots indicate per-example accuracies of two groups of 100 networks each with different architectures indicated by axes labels. Orange dots show the distribution for groups of architecturally identical models, copied from the plot on the diagonal above. Accuracy of each model is shown at the top.}
\label{fig:example_accuracy_cifar10_depth}
\end{figure}

\begin{figure}[h]
\begin{center}
\includegraphics[trim=0 5 0 0,clip,width=\linewidth]{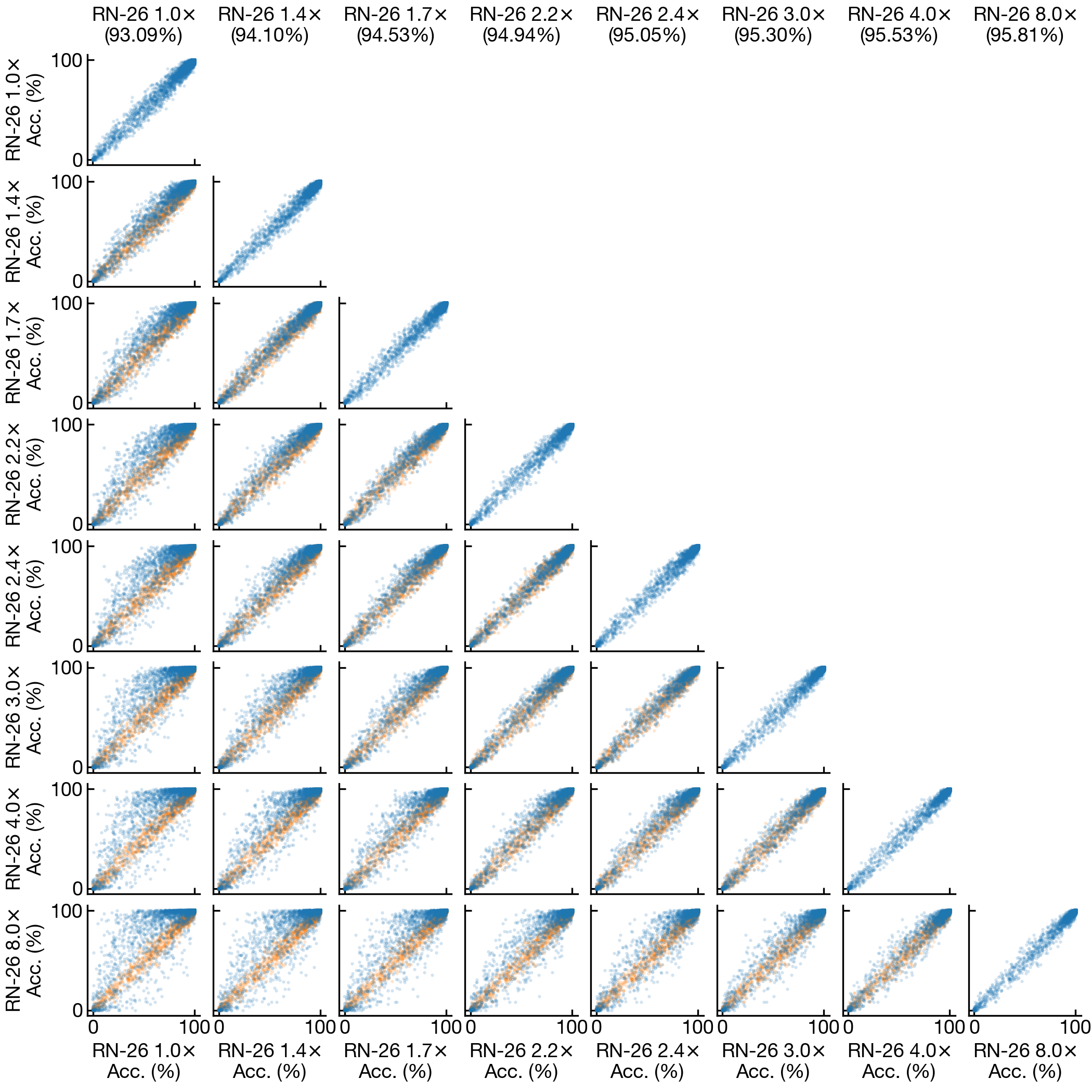}
\end{center}
\caption{\textbf{Effect of width on example accuracy.} Scatter plots of per-example accuracies of ResNet models with different widths on CIFAR-10. Blue dots indicate per-example accuracies of two groups of 100 networks each with different architectures indicated by axes labels. Orange dots show the expected distribution for groups of architecturally identical models, copied from the plot on the diagonal above. Accuracy of each model is shown at the top.}
\label{fig:example_accuracy_cifar10_width}
\end{figure}

\clearpage
\subsection{Effect of Varying Width and Depth on ImageNet Predictions}
\begin{figure}[h]
\begin{center}
\includegraphics[trim=0 0 0 0,clip,width=0.57\linewidth]{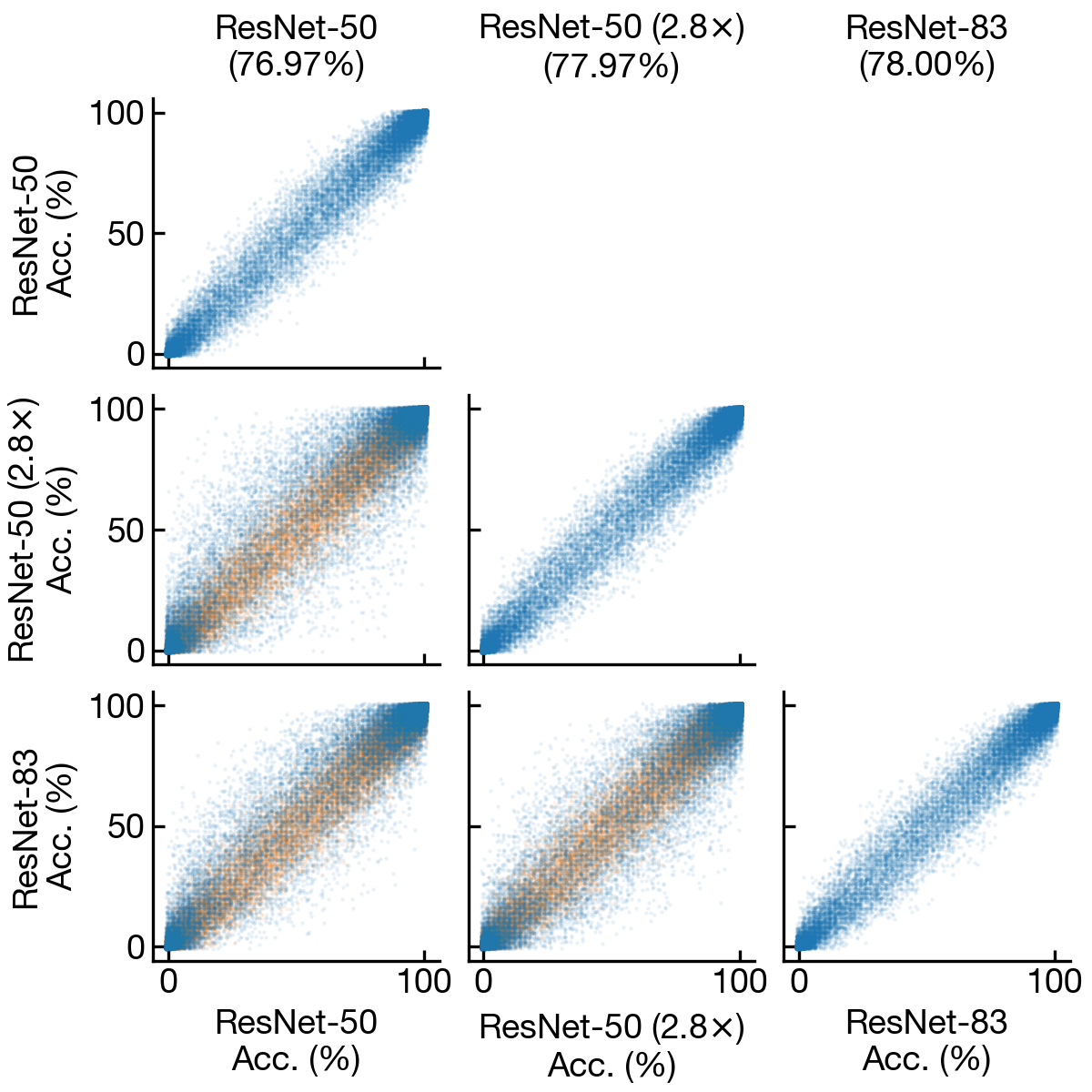}
\end{center}
\caption{\textbf{Systematic per-example performance differences between wide and deep models on ImageNet.} Scatter plots of per-example accuracy averaged across 50 vanilla ResNet-50 ($1\times$) models versus that for groups of 50 models with increased depth ($6 \to 17$ blocks, ``ResNet-83'') or width ($2.8\times$ wider) in the 3rd stage. Orange dots in plots show the expected distribution for two groups of 50 architecturally identical models, copied from the plot on the diagonal above. The deeper and wider models have very similar but statistically distinguishable accuracy (mean $\pm$ SEM for deeper model: $78.00 \pm 0.01$, wider model: $77.97 \pm 0.01$, $t(99) = 2.0$, $p = 0.047$).}
\label{fig:example_accuracy_imagenet}
\end{figure}

\FloatBarrier
\subsection{Effect Sizes for Class-Level Effects}
\label{app:class_effect_size}
We measure how much of the difference between the example-level predictions of the wide and deep ImageNet ResNets in Section~\ref{sec:predictions} could be explained by the classes to which they belonged by fitting a set of three models. Model A attempts to model whether each prediction was correct or incorrect as a linear combination factors corresponding to the example ID and whether the prediction came from a wide or deep model (\verb|statsmodels| formula \verb!y ~ C(example_id) + C(wide_or_deep)!). Model B includes a factor corresponding to the example ID as well as factors corresponding to the interaction between the class ID and the type of model the prediction came from (\verb|statsmodels| formula \verb!y ~ C(example_id) + C(wide_or_deep) * C(class_id)!). Model C includes the interaction between the example ID and the type of model, and corresponds to simply measuring the average accuracy separately for both types of models (\verb|statsmodels| formula \verb!y ~ C(example_id) * C(wide_or_deep)!). Note that model C is nested inside model B, which is nested inside model A.

We seek to measure how much of the variability in predictions that can be explained by model C but not model A can be explained by model B. We fit these models with logistic regression and measure the residual variance $\mathrm{Var}_Q = \sum_{i=1}^n (y_i - \pi_i)^2 / n$, where $n$ is the number of examples $\times$ the number of models, $y_i$ is either 1 or 0 depending on whether the CNN's prediction was correct or incorrect, and $\pi_i$ is the output probability from logistic regression model $Q$. We then compute the squared differences between $y$ and the predictions of the logistic regression model:
\begin{align}
    v^2 &= \frac{\mathrm{Var}_A - \mathrm{Var}_B}{\mathrm{Var}_A - \mathrm{Var}_C} = 0.11.
\end{align}
This approach is analogous to the pseudo-$R^2$ of \citet{efron1978regression}. 

Finally, we can compare the AIC values of the logistic regression models, shown in the table below. Because the models are nested GLMs, we can also test for statistical significance using a $\chi^2$ test, which is highly significant for each pair of nested models. We do not report p-values because they are 0 to within machine precision.
\begin{table}[h]
\caption{\textbf{AIC for models A, B, and C, described above}}
\vspace{-1em}
\begin{center}
 \begin{tabular}{lr} 
 \multicolumn{1}{c}{Model} & \multicolumn{1}{c}{AIC}\\
 \hline
  A: Example + Model & 3011367\\
  B: Example + Class * Model & 3006889\\
  C: Example * Model & 2969410\\
\end{tabular}
\end{center}
\label{tab:aic}
\end{table}

\subsection{Performance Differences Among ImageNet Synsets}
\label{app:synset_difference}
\begin{table}[h]
    \caption{\textbf{Differeces between wide and deep architectures on ImageNet synsets with many classes}. Comparison of accuracy of wide (ResNet-50 with $2.8\times$ width in 3rd stage) and deep (ResNet-83) ImageNet models on synsets with $>$50 classes. Note that some synsets are descendants (hyponyms) of others. p-values are computed using a t-test with multiple testing (Holm-Sidak) correction. Results are for the sets of models used to generate blue dots in Figures~\ref{fig:example_accuracy_imagenet} and~\ref{fig:example_accuracy}. Post-selection effect sizes and testing in the main text use a disjoint set of models.}
    \centering
    \footnotesize
    \begin{tabular}{lrrrrr}
    Class & \# Classes & Wide Acc. & Deep Acc. & Diff. & p-value\\
    \hline
entity & 1000 & $78.0 \pm 0.01$ & $78.0 \pm  0.01$ & $-0.03$ & $0.89$\\
physical entity & 997 & $78.0 \pm 0.01$ & $78.0 \pm  0.01$ & $-0.03$ & $0.89$\\
object & 958 & $78.1 \pm 0.01$ & $78.1 \pm  0.01$ & $-0.04$ & $0.76$\\
whole & 949 & $78.2 \pm 0.02$ & $78.2 \pm  0.01$ & $-0.05$ & $0.48$\\
artifact & 522 & $73.8 \pm 0.02$ & $73.8 \pm  0.02$ & $-0.01$ & $1$\\
living thing & 410 & $83.5 \pm 0.02$ & $83.6 \pm  0.02$ & $-0.10$ & $\mathbf{0.023}$\\
organism & 410 & $83.5 \pm 0.02$ & $83.6 \pm  0.02$ & $-0.10$ & $\mathbf{0.023}$\\
animal & 398 & $83.3 \pm 0.02$ & $83.4 \pm  0.02$ & $-0.09$ & $\mathbf{0.032}$\\
instrumentality & 358 & $73.9 \pm 0.03$ & $74.0 \pm  0.02$ & $-0.02$ & $1$\\
vertebrate & 337 & $83.3 \pm 0.02$ & $83.3 \pm  0.02$ & $-0.08$ & $0.22$\\
chordate & 337 & $83.3 \pm 0.02$ & $83.3 \pm  0.02$ & $-0.08$ & $0.22$\\
mammal & 218 & $82.0 \pm 0.03$ & $82.1 \pm  0.03$ & $-0.09$ & $0.47$\\
placental & 212 & $81.9 \pm 0.03$ & $81.9 \pm  0.03$ & $-0.08$ & $0.66$\\
carnivore & 158 & $81.1 \pm 0.03$ & $81.2 \pm  0.03$ & $-0.09$ & $0.73$\\
device & 130 & $72.9 \pm 0.05$ & $72.9 \pm  0.04$ & $-0.00$ & $1$\\
canine & 130 & $81.3 \pm 0.03$ & $81.4 \pm  0.04$ & $-0.04$ & $1$\\
domestic animal & 123 & $81.0 \pm 0.04$ & $81.0 \pm  0.04$ & $-0.00$ & $1$\\
dog & 118 & $81.6 \pm 0.04$ & $81.6 \pm  0.04$ & $-0.01$ & $1$\\
container & 100 & $72.7 \pm 0.05$ & $72.7 \pm  0.04$ & $0.00$ & $1$\\
covering & 90 & $72.0 \pm 0.05$ & $72.2 \pm  0.05$ & $-0.19$ & $0.13$\\
conveyance & 72 & $83.5 \pm 0.04$ & $83.4 \pm  0.05$ & $0.13$ & $0.65$\\
vehicle & 67 & $83.2 \pm 0.04$ & $83.1 \pm  0.05$ & $0.11$ & $0.76$\\
hunting dog & 63 & $81.2 \pm 0.05$ & $81.2 \pm  0.05$ & $0.01$ & $1$\\
commodity & 63 & $72.2 \pm 0.06$ & $72.6 \pm  0.07$ & $-0.42$ & $\mathbf{5.1 \times 10^{-5}}$\\
consumer goods & 62 & $72.3 \pm 0.06$ & $72.7 \pm  0.07$ & $-0.41$ & $\mathbf{6.7 \times 10^{-5}}$\\
invertebrate & 61 & $83.6 \pm 0.05$ & $83.8 \pm  0.04$ & $-0.16$ & $0.37$\\
bird & 59 & $92.5 \pm 0.04$ & $92.7 \pm  0.05$ & $-0.21$ & $\mathbf{0.0018}$\\
structure & 58 & $75.9 \pm 0.06$ & $75.5 \pm  0.07$ & $0.42$ & $\mathbf{5.7 \times 10^{-5}}$\\
matter & 50 & $77.6 \pm 0.05$ & $77.4 \pm  0.05$ & $0.17$ & $0.74$\\
    \end{tabular}
    \label{tab:imagenet_synset_diffs}
\end{table}

\end{document}